\documentclass[conference]{IEEEtran}
\usepackage{times}

\usepackage[numbers]{natbib}
\usepackage{multicol}
\usepackage[bookmarks=true]{hyperref}
\usepackage{amsmath}
\usepackage{algorithm}
\usepackage{algpseudocode}
\usepackage{amsthm}
\usepackage{xcolor}
\usepackage{booktabs}
\usepackage{multirow}
\usepackage{graphicx}
\usepackage{amssymb}
\usepackage{grffile}
\usepackage{cuted}
\usepackage{caption}
\usepackage{wrapfig}
\usepackage{subcaption} % Only needed for Option 3 (Side-by-Side)
\usepackage{placeins}
\usepackage{float}

\usepackage{titlesec}
\usepackage{balance}

\titlespacing*{\section}{0pt}{1.5ex plus 0.5ex minus 0.2ex}{1ex plus 0.2ex}
\titlespacing*{\subsection}{0pt}{1ex plus 0.3ex minus 0.2ex}{0.5ex plus 0.1ex}
\titlespacing*{\subsubsection}{0pt}{0.8ex plus 0.2ex minus 0.1ex}{0.4ex plus 0.1ex}
\newcommand{\new}[1]{#1}
\newtheorem{proposition}{Proposition}

\begin{document}

% paper title
\title{TAIL-Safe: Task-Agnostic Safety Monitoring for Imitation Learning Policies}

% You will get a Paper-ID when submitting a pdf file to the conference system
\author{\authorblockN{Riad Ahmed \quad Momotaz Begum}\\
\authorblockA{University of New Hampshire\\
Email: \{riad.ahmed, momotaz.begum\}@unh.edu\\
Project page: \url{https://assistiveroboticsunh.github.io/tail_safe/}}}

%\author{\authorblockN{Michael Shell}
%\authorblockA{School of Electrical and\\Computer Engineering\\
%Georgia Institute of Technology\\
%Atlanta, Georgia 30332--0250\\
%Email: mshell@ece.gatech.edu}
%\and
%\authorblockN{Homer Simpson}
%\authorblockA{Twentieth Century Fox\\
%Springfield, USA\\
%Email: homer@thesimpsons.com}
%\and
%\authorblockN{James Kirk\\ and Montgomery Scott}
%\authorblockA{Starfleet Academy\\
%San Francisco, California 96678-2391\\
%Telephone: (800) 555--1212\\
%Fax: (888) 555--1212}}

% avoiding spaces at the end of the author lines is not a problem with
% conference papers because we don't use \thanks or \IEEEmembership

% for over three affiliations, or if they all won't fit within the width
% of the page, use this alternative format:
% 
%\author{\authorblockN{Michael Shell\authorrefmark{1},
%Homer Simpson\authorrefmark{2},
%James Kirk\authorrefmark{3}, 
%Montgomery Scott\authorrefmark{3} and
%Eldon Tyrell\authorrefmark{4}}
%\authorblockA{\authorrefmark{1}School of Electrical and Computer Engineering\\
%Georgia Institute of Technology,
%Atlanta, Georgia 30332--0250\\ Email: mshell@ece.gatech.edu}
%\authorblockA{\authorrefmark{2}Twentieth Century Fox, Springfield, USA\\
%Email: homer@thesimpsons.com}
%\authorblockA{\authorrefmark{3}Starfleet Academy, San Francisco, California 96678-2391\\
%Telephone: (800) 555--1212, Fax: (888) 555--1212}
%\authorblockA{\authorrefmark{4}Tyrell Inc., 123 Replicant Street, Los Angeles, California 90210--4321}}

\maketitle

\begin{strip}
    \centering
    \includegraphics[width=0.9\textwidth]{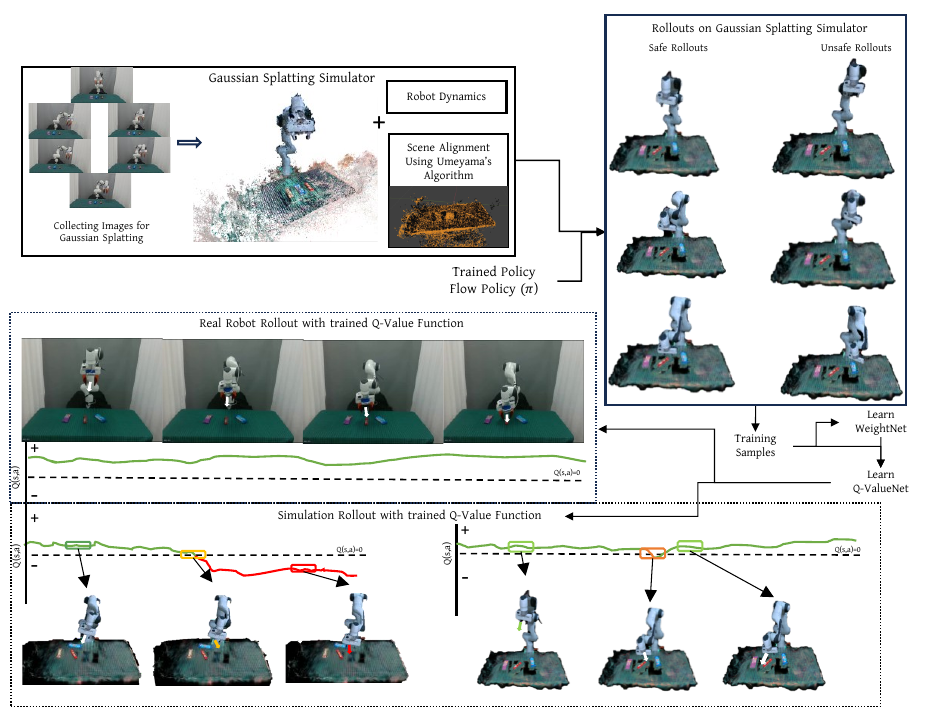}
    \captionof{figure}{\textbf{Overview of TAIL-Safe.} \textbf{(Top-left)} A Gaussian Splatting pipeline constructs a digital twin ($\sim$20 min: 5 min capture + 15 min reconstruction). \textbf{(Top-right)} The simulator generates safe/unsafe trajectories under perturbations for training WeightNet (score fusion) and Q-ValueNet (success prediction). \textbf{(Middle)} At deployment, TAIL-Safe monitors $Q(s,a)$ in real time, remaining inactive while $Q(s,a) > 0$. \textbf{(Bottom)} When $Q(s,a)$ approaches zero, recovery steers the system to safety; without intervention, the policy fails.}
    \label{fig:teaser}
\end{strip}

% \begin{figure*}[t]
%   \centering
%   \includegraphics[width=\textwidth]{images/overall_diagram.pdf} % Use 1.0\textwidth for full width
%   \caption{Overall System}
%   \label{fig:teaser}
% \end{figure*}
\begin{abstract}
Recent imitation learning (IL) algorithms -- such as flow-matching~\cite{lipman2022flow} and diffusion~\cite{chi2025diffusion} policies -- demonstrate remarkable performance in learning complex manipulation tasks. However, these policies often fail even when operating within their training distribution due to extreme sensitivity to initial conditions and irreducible approximation errors that lead to compounding drift. This makes it unsafe to deploy IL policies in the field where out-of-distribution scenarios are prevalent. A prerequisite for safe deployment is enabling the policy to determine whether it can execute a task the way it was learned from demonstrations. This paper presents TAIL-Safe, a principled approach to identify, for a trained IL policy, a \textit{safe set} from where the policy empirically succeeds in completing the learned task. We propose a Lipschitz-continuous Q-value function that maps state-action pairs to a long-term safety score based on three short-term task-agnostic criteria: visibility, recognizability, and graspability. The zero-superlevel set of this function characterizes an empirical control invariant set over state-action pairs. When the nominal policy proposes an action outside this set, we apply a recovery mechanism inspired by Nagumo's theorem that uses gradient ascent on the Q-function to steer the policy back to safety. To learn this Q-function, we construct a high-fidelity digital twin using Gaussian Splatting that enables systematic collection of failure data without risk to physical hardware. Experiments with a Franka Emika robot demonstrate that flow-matching policies, which fail under run-time perturbations, achieve consistent task success when guided by the proposed TAIL-Safe.
\end{abstract}

\IEEEpeerreviewmaketitle

\section{Introduction}
Imitation learning (IL) research has traditionally focused on task-success to measure policy performance. Recent IL algorithms -- such as flow matching \cite{lipman2022flow}, diffusion \cite{chi2025diffusion}, and variants -- demonstrate remarkable task-success in complex manipulation tasks \cite{zhao2023aloha, ze20243d, black2024pi0}. Although these policies can fail miserably, discussion on policy failures is nascent in the current IL literature. Run-time variations are a known cause of failure for visuomotor policies \cite{ross2011dagger,codevilla2019exploring}. Even deterministic policies can behave unpredictably under minor changes in initial conditions, and compounding errors can cause any BC-based policy to fail even in-distribution. Therefore, a prerequisite for IL policies to be employed ubiquitously in field robotics -- especially in domains where policy failure may endanger human lives or properties -- is offering a basic safety assurance: \textit{the task will be executed at run time the way it was learned from demonstrations}. This paper refers to this binary quantity as \textit{task success assurance}. For example, with \textit{task success assurance}, picking up a glass will \underline{never} result in breakage due to grasp failure, or picking up medication will \underline{always} succeed if placed in designated locations. Without \textit{task success assurance}, an IL policy should not roll out. Evaluating \textit{task success assurance} requires the policy to know: 1) \textit{\textbf{what}} ways it can fail, 2) \textit{\textbf{where}} in state space failures may occur, and 3) \textit{\textbf{how}} to avoid states/actions leading to failure. Knowing the \textit{what, where,} and \textit{how} holistically is highly non-trivial. For example, knowledge of how a task may fail must be either explicitly supplied by humans \cite{kelly2019hgdagger}, limiting generalizability, or extracted from failed demonstrations which may be unavailable due to safety concerns. Identifying the area in state space from where the policy may fail -- complementary to where it is assured to succeed -- is related to finding the control invariant set.  Mathematical tools in classical control theory -- such as, control barrier functions~\cite{ames2017cbf, blanchini1999set} and reachability-based methods~\cite{bansal2017hamilton} -- are capable of identifying invariant sets for dynamic systems with known system dynamics. System dynamics however is unpredictable in visuomotor IL. Additionally, all these methods are difficult to scale up in high dimension and visuomotor IL is a high dimensional domain. These make it difficult to directly adopt control theoretic approaches for safety in IL. If the control invariant set of a policy can be identified, there are principled approaches, such as Nagumo's tangentiality condition \cite{nagumo1942lage,blanchini2008set}, that can be leveraged to mitigate policy failure. Overall, safety at deployment is a nascent area in IL research. There is no current work that enables an IL policy to offer \textit{task success assurance} at run-time in a task agnostic manner. This paper bridges that gap. 
\par The proposed framework, termed \textbf{T}ask-\textbf{A}gnostic \textbf{I}mitation \textbf{L}earning \textbf{Safe}ty Watchdog (TAIL-Safe), enables a trained deterministic policy to identify the area in state space from where the training data indicates the policy can successfully complete the task (Figure~\ref{fig:teaser}). This empirical assurance holds when the run-time environment remains within the distribution captured by the digital twin---specifically, when object positions vary within $\pm 5$cm, orientations within $\pm 30^\circ$, and lighting conditions do not cause complete loss of object visibility. TAIL-Safe starts with addressing the \textit{what} of \textit{task success assurance} through establishing three task-agnostic criteria to identify a policy's imminent failure. It then moves on to address the \textit{where} of \textit{task success assurance} through learning a Lipschitz-continuous Q-value function that maps the policy's state-action space to a safety score value leveraging the established criteria. The zero super level set of this function defines an \textit{empirical} safe set---approximating control invariance to the extent that the learned Q accurately reflects the policy's behavior. Finally, TAIL-Safe adapts Nagumo's tangentiality condition \cite{nagumo1942lage} to address the \textit{how} of \textit{task success assurance} by calculating a recovery action based on the gradient of the Q-value function; since we lack explicit dynamics, this is a heuristic adaptation where $\nabla_a Q$ serves as a proxy for the safety-increasing direction. We employed high-fidelity Gaussian Splatting techniques to learn the Q-value function. Experiments with a Franka Emika
robot demonstrate TAIL-Safe's ability to perform as a safety watchdog for flow-matching policies for two different real-world tasks.
\par The primary contributions of this paper are in:
\begin{itemize}
\item a principled approach to infer a learned policy's  ability to do a successful run-time execution (Section~\ref{sec:problem_formulation}). 
\item proposing and implementing three task-agnostic criteria for successful run-time execution of any manipulation task  (Section~\ref{sec:reward_function}). 
\item a framework to learn a Safety Q-value function that acts as a predictive landscape for task success assurance. 
\item a recovery filter that can guide a policy to regions with task success assurance (Section~\ref{sec:q_function} and Section~\ref{sec:recovery_controller}). 
    % \item \textbf{Real-to-Sim Pipeline for Failure Analysis:} We utilize Gaussian Splatting to reconstruct the environment, creating a photorealistic simulator that allows us to safely induce failures and explore states outside the expert demonstrations.
    % \item \textbf{Adaptive Safety Heuristic:} We introduce \textit{WeightNet}, which fuses modular scores—recognizability, visibility, and graspability—into a Lipschitz-continuous safety margin that adapts to the current task context.
    % \item \textbf{Safety Q-Value Landscape and Recovery:} We learn a Safety Q-value function that acts as a predictive landscape for task success. Using Nagumo's Theorem, we implement a runtime filter that uses the gradient of the Q-function to correct unsafe actions and keep the robot within the invariant region.
\end{itemize}
A key insight is that visibility, recognizability, and graspability exhibit natural Lipschitz structure: small end-effector displacements produce bounded changes in these scores. This physical smoothness implies that if $(s_1, a_1)$ and $(s_3, a_3)$ are both safe, an intermediate $(s_2, a_2)$ is likely safe as well. By enforcing Lipschitz continuity via spectral normalization, the learned Q-function enables reliable interpolation between observed safe and unsafe regions.

\section{Problem Formulation}
\label{sec:problem_formulation}
% We address the inherent fragility of deterministic imitation learning policies, which frequently fail even when operating strictly within their training distribution. Beyond standard distribution drift, these policies often exhibit extreme sensitivity to initial conditions; slight variations in the robot's starting configuration or accumulated approximation errors can drive the system into unstable regions of the state space. 
Let $\pi: \mathcal{S} \rightarrow \mathcal{A}$ be a deterministic visuomotor IL policy, e.g.  a policy learned through flow-matching algorithm~\cite{lipman2022flow}, that has learned a manipulation task from demonstrations $\mathcal{D} = \{(o_t, a_t, o_{t+1})\}_{t=1}^{T}$, where $o_t$ denotes the RGB-D observation and $a_t$ the corresponding expert action. %The policy is trained via a flow-matching algorithm~\cite{lipman2022flow} that learns to map observations to actions by matching the velocity field of expert trajectories. 
Our objective is to design the safety watchdog TAIL-Safe that enables $\pi$ to offer \textit{task success assurance}. We model TAIL-Safe as a Markov Decision Process (MDP) defined by the tuple $\mathcal{M}=(\mathcal{S},\mathcal{A},\mathcal{T},\mathcal{R},\gamma)$. The state space $\mathcal{S} \subseteq \mathbb{R}^{n}$ is the same as $\pi$'s state space and comprises the robot's proprioceptive data concatenated with a latent representation of the visual environment. The action space $\mathcal{A} \subseteq \mathbb{R}^{m}$ consists of continuous end-effector delta commands and is the same as  $\pi$'s action space. The transition dynamics $\mathcal{T}:\mathcal{S}\times\mathcal{A} \rightarrow \mathcal{S}$ represents the physical evolution  $s_{t+1}=\mathcal{T}(s_{t},a_{t})$ under the policy $\pi$ and is unknown due to run-time uncertainties -- such as, perceptual noises, environmental perturbations, changes in $\pi$'s starting position -- despite $\pi$ being deterministic. The reward function $\mathcal{R}$ captures the immediate benefit of taking an action at a state under the policy $\pi$, exclusively with respect to achieving the \textit{task success assurance} and is  unknown. The goal of TAIL-Safe is to learn an action-value function $Q(s,a): \mathcal{S} \times \mathcal{A}\rightarrow \mathbb{R}$ in such a way that, with an appropriately designed $\mathcal{R}$, it evaluates the long-term value of each state-action pair under policy $\pi$ with respect to achieving the \textit{task success assurance}. If we can design $\mathcal{R}$ such that $\mathcal{R}(s,a) \geq 0$ indicates satisfaction of all known task-critical criteria at $(s,a)$, and if we define $Q(s,a)$ to calculate the \textit{minimum} reward encountered along the trajectories, instead of the sum of rewards, $Q(s,a) \geq 0$ will indicate---based on the training data---that there exists a sequence of actions from $(s,a)$ that maintains all task-critical criteria until the end of trajectory. The zero-superlevel set $\{(s,a) : Q(s,a) \geq 0\}$ thus characterizes an empirical control invariant set from which task success is expected based on the training data. This transforms the Q-function into a binary safety indicator: positive values provide empirical assurance of task success, while negative values signal likely failure.  %Evaluating such a $Q(s,a)$ is computationally challenging due to the very high dimensional state space involved with visuomotor policy and the need for realistic rendering of the environments in $\mathcal{D}$. %The final objective of TAIL-Safe is to leverage the learned Q-function to design a recovery controller that can steer $\pi$ away from state-action pairs that may lead to task failure. 
\par Realizing TAIL-Safe requires solving the following challenges that are discussed in Section~\ref{sec:reward_function}--\ref{sec:recovery_controller}: 
\begin{itemize}
\item Defining a reward function $\mathcal{R}$ (Section~\ref{sec:reward_function}). %that maps a state-action pair $(s,a)$ to an instantaneous safety score with respect to task success in a task-agnostic manner  
\item Constructing the Q-function $Q(s,a)$ (Section~\ref{sec:q_function}).  %that can be leveraged to identify a control invariant set for $\pi$ 
\item Leveraging $Q(s,a)$ to design a recovery controller to prevent $\pi$ from entering into states that have no \textit{task success assurance} (Section~\ref{sec:recovery_controller}).
\end{itemize}

\section{Related Works}

 Prior approaches address safety through constraint-based training~\cite{liu2022constrained}, ensemble uncertainty~\cite{lakshminarayanan2017simple}, or human intervention~\cite{hoque2021lazydagger,kelly2019hgdagger}. Uncertainty-based methods (e.g., deep ensembles) can detect risky states but lack recovery mechanisms. Unlike these, TAIL-Safe learns an empirical safe set with gradient-based recovery without human supervision. Control Barrier Functions~\cite{ames2017cbf} and Hamilton-Jacobi reachability~\cite{bansal2017hamilton} provide formal guarantees but require explicit dynamics. Learning-based extensions~\cite{robey2020learning,dawson2022safe} partially address this but do not scale to high-dimensional visual observations. Latent Safety Filters~\cite{so2023solving} and LatentCBF~\cite{xiao2023barriernet} learn barrier functions in latent space with probabilistic certificates, often requiring dynamics models or latent dynamics learning. TAIL-Safe differs by avoiding dynamics modeling entirely, trading formal guarantees for reduced complexity and direct applicability to pre-trained visuomotor policies.

 Compared to model-based methods (CBFs, HJ reachability), TAIL-Safe trades formal guarantees for applicability to high-dimensional visuomotor policies without dynamics. Compared to uncertainty-based detection, we provide recovery rather than just flagging. Compared to learned latent filters, we avoid dynamics modeling by relying on trajectory-level outcomes.

\section{TAIL-Safe: A Safety Watchdog for Imitation Learning Policy}
\label{sec:tail_safe}
This section discusses the implementation of TAIL-Safe as formalized in Section II.
 \subsection{Task-agnostic Definition of the Reward Function $\mathcal{R}$}
\label{sec:reward_function}
\label{sec:safetycriteria}
The reward function $\mathcal{R}$ needs to capture, in short-term, how a state-action pair contributes in completing the task. The current version of TAIL-Safe operates with no privileged information such as demonstrations that show how the policy may fail~\cite{nakamura2025latent}. Instead, we propose three \textit{task-agnostic} criteria that needs to be fulfilled at a state in order for the policy to succeed in completing any manipulation task that involves grasping an object: 1) \textbf{Visibility} ($s_{fov}$): the object to be manipulated should remain within the sensor's field of view; 2) \textbf{Recognizability} ($s_{rec}$): the perception of the target object should have some degree of overlap with that of the training distribution; 3) \textbf{Graspability} ($s_{grasp}$): The geometric quality of potential contact with the target object should be high. We encode these criteria into a scalar safety heuristic function, $h: \mathcal{S} \times \mathcal{A} \rightarrow [-1, 1]$. \new{For non-grasping tasks (e.g., pushing, pouring), the graspability criterion can be replaced with task-appropriate geometric constraints (e.g., contact alignment for insertion).} The function $h$ can be modified to incorporate additional task-specific criteria without impacting the TAIL-Safe formulation in Section II. TAIL-Safe operationalizes these task-agnostic criteria and leaves extracting task-specific criteria as future work. The function $h$ is designed as a signed distance field for task success viability: a positive value $h(s, a) > 0$ indicates that all criteria are satisfied and task completion remains viable. Conversely, $h(s, a) < 0$ signifies that the system has failed to satisfy the requisite modular scores necessary for task completion. We use $h$ as the reward function.
\begin{equation}
    \mathcal{R}(s, a) \equiv h(s, a)
\end{equation}

\subsubsection{Evaluating Visibility Score, $s_{fov}$}
We project the target object's point cloud into camera coordinates and compute a geometric score based on point density and distance from the image center, yielding $s_{fov} \in [0,1]$ that decays as the object approaches peripheral regions (see Appendix~\ref{app:visibility}).

\subsubsection{Evaluating Recognizability Score, $s_{rec}$}
We extract 128-dimensional embeddings $\phi(o)$ from the policy's visual encoder, apply PCA (95\% variance), and fit a Gaussian $(\mu_{\mathcal{D}}, \Sigma_{\mathcal{D}})$ over expert demonstrations. The score is derived from the inverse Mahalanobis distance:
\begin{equation}
    s_{rec}(s) = \sigma\left(-d_M(\phi(o), \mu_{\mathcal{D}}, \Sigma_{\mathcal{D}})\right)
\end{equation}
where $\sigma(\cdot)$ maps to $[0,1]$, penalizing observations statistically distinct from the expert's experience. (see Appendix~\ref{app:visibility})

\subsubsection{Evaluating Graspability Score, $s_{grasp}$}
We sample antipodal grasp candidates from the segmented object point cloud~\cite{ten2017grasp} and score alignment with the current end-effector pose:
\begin{equation}
    s_{grasp}(s,a) = \max_{g \in \mathcal{G}} \; \text{sim}(T_{ee}(s,a), T_g)
\end{equation}
where $T_g \in SE(3)$ is the grasp transform,  $T_{ee}$ is the end-effector transform induced by action $a$ and $\text{sim}(\cdot)$ measures pose similarity via weighted translational and rotational distances. Object segmentation is performed using SAMv2~\cite{ravi2024sam2}, which provides robust masks even under partial occlusion. In our tabletop setting with simple, well-separated objects, segmentation errors were rare ($<$2\% of frames); when segmentation fails completely, $s_{grasp}$ defaults to zero, triggering conservative safety behavior.
(see Appendix~\ref{app:visibility})

\subsubsection{Learning the local score function $h$}
The goal of $h(s,a)$ (which is the same as $\mathcal{R}$) is to analyze $s_{fov}, s_{rec},$ and $s_{grasp}$ to assess the viability of completing the task from $(s,a)$. The challenge is that the contribution of these three components depends on the position of the state along the trajectory. %A fundamental challenge is that the definition of a ``safe state'', with respect to \textit{task-success assurance}, is dynamic and context-dependent. 
For instance, visual occlusion of the target should be avoided as the policy approaches the object but is unavoidable during grasp execution. Therefore, a state closer to the grasp location should not have low reward because of a poor $s_{fov}$ value (see Appendix~\ref{app:visibility}). We address this by learning a context-aware fusion of $s_{fov}, s_{rec},$ and $s_{grasp}$ via a multi-layer perceptron that we term \textbf{WeightNet}. Each of the three criteria got fused by WeightNet into a single scalar. Unlike a fixed weighted sum, WeightNet learns dynamic weights that adapt to the current task phase:
\begin{equation}
    h(s, a) = \sum_{i \in \{rec, fov, grasp\}} w_i(s) \cdot s_i(s, a) - \tau
\end{equation}
where $\tau$ is a learnable threshold such that positive values indicate safe states. The network outputs weights via softmax, enabling automatic reweighting---visibility dominates during approach, graspability near contact (Fig.~\ref{fig:weights}). We enforce Lipschitz continuity via $\ell_\infty$-norm row-wise weight normalization. WeightNet is trained via binary cross-entropy on trajectory-level labels; see Appendix~\ref{app:weightnet} for details.

\subsection{Constructing a $Q$ Function to Identify a Control Invariant Set for the Policy}
\label{sec:q_function}
Local safety checks incorporated in $\mathcal{R}$ are insufficient; a robot may be in a state with high $\mathcal{R}$ but may execute an action at any point of time in the future that can lead to a task failure. To capture this long-term behavior, we define a Safety Q-value function $Q(s, a)$. This function estimates the minimum safety margin encountered along the trajectory induced by the policy $\pi$:
\begin{equation}
    Q(s, a) = \mathbb{E}_{\pi} \left[ \min_{k \ge 0} \gamma^{k} \mathcal{R}(s_{t+k}, a_{t+k}) \mid s_t = s, a_t = a \right]
\end{equation}
Recall that $\mathcal{R}(s,a) \equiv h(s,a)$. This formulation frames the problem as a Reach-Avoid specification which is typically the way safety is formulated in control-theoretic safety literature \cite{fisac2015reach,hsu2023isaacs} (Details in Appendix~\ref{app:q_training}). A non-negative value, $Q(s, a) \ge 0$, serves as an empirical indicator that---based on the training data---the policy $\pi$ can complete the task as learned from demonstrations. Conversely, $Q(s, a) < 0$ acts as a predictive early warning, signaling that the current trajectory is likely driving the system toward a violation of the success criteria.
Therefore, by construction, the Control Invariant Set for the policy $\pi$, that we term $\mathcal{C}_{safe} \subset \mathcal{S} \times \mathcal{A}$, is the zero-superlevel set of $Q(s,a)$:
\begin{equation}
    \mathcal{C}_{safe} = \{(s, a) \in \mathcal{S} \times \mathcal{A} \mid Q(s, a) \ge 0\}
\end{equation}
If $Q$ were analytically tractable, then $C_{safe}$ would provide a formal guarantee of task success for all states it contains. In practice, the high dimensionality of the problem forces $Q$ to be estimated via function approximation, so $C_{safe}$ can offer only empirical assurance of task success. See the \textbf{Remark on Invariance} at the end of this subsection.

\subsubsection{Creating a Digital Twin from $\mathcal{D}$ to Learn $Q$}
A fundamental challenge in learning the boundary of $\mathcal{C}_{safe}$ is the inherent requirement for failure data. To accurately characterize where $\pi$ may fail, TAIL-Safe must observe the policy in states that lead to task failure. However, collecting such data on a physical robot is both dangerous and operationally inefficient -- the very failures we need to observe are precisely the ones we wish to prevent. To address this, we construct a high-fidelity digital twin using Gaussian Splatting~\cite{kerbl20233d} from 100 RGB images ($\sim$20 min total: 5 min capture + 15 min reconstruction). We employ SAMv2~\cite{ravi2024sam2} for segmentation and DINOv2~\cite{oquab2023dinov2} features for accurate object boundaries. The twin renders at $>$30 FPS while preserving visual properties $\pi$ was trained on, with an integrated Franka kinematic model enabling policy rollouts matching physical observations. \new{Crucially, the twin is \emph{not} a frozen pre-built model used at deployment: each tracked object's Gaussians are updated online from the wrist-mounted RGB-D stream so that the renders seen by the policy mirror the current real-world state. The twin therefore acts as a live visual mirror; physical contact, friction, and dynamics are handled by the real robot, and the Lipschitz continuity of $Q$ (Sec.~IV-B) bounds the effect of small rendering artifacts on safety predictions (Appendix~\ref{app:twin_live}).} The digital twin enables systematic exploration beyond $\mathcal{D}$. We perturb $\pi$ in three ways: object displacement ($\pm 5$cm) and rotation ($\pm 30^\circ$), Gaussian noise injection ($\sigma \in [0.01, 0.05]$), and gradient-based perturbations toward the safe set boundary. This yields $\sim$500 rollouts per task ($\sim$40\% failures), recording state, action, safety scores, and outcomes at each timestep. This dataset---successful trajectories defining $\mathcal{C}_{safe}$'s interior and failures characterizing its boundary---forms the basis for training Q.

\subsubsection{Reach-Avoid Bellman operator}
Standard RL formulates Q as cumulative discounted rewards, but this is ill-suited for \textit{task success assurance}---a single safety violation invalidates the entire trajectory. We instead frame the problem as \textit{reach-avoid}: $\pi$ must reach task completion while strictly avoiding any $(s,a)$ where $h(s,a) < 0$. We replace the standard Bellman recursion with one propagating the \textit{minimum}:
\begin{equation}
    Q^\pi(s_t, a_t) = \min\left(h(s_t, a_t),\; \gamma \cdot Q^\pi(s_{t+1}, \pi(s_{t+1}))\right)
\end{equation}
The $\min$ operator ensures that $Q(s,a)$ is upper-bounded by the lowest safety score encountered anywhere along the future trajectory. Therefore, $Q(s,a) \geq 0$ indicates---based on the observed training trajectories---that a sequence of actions maintaining all task criteria until task completion is likely achievable, while $Q(s,a) < 0$ signals likely failure. Appendix~\ref{app:q_training} provides additional details on training the Q-function.
%This defines \textit{what} Q should predict; 

We next address \textit{how} of \textit{task-success assurance} by  ensuring that Q provides informative gradients for generating corrective action.
\begin{figure}[t]
    \centering
    \includegraphics[width=0.90\columnwidth]{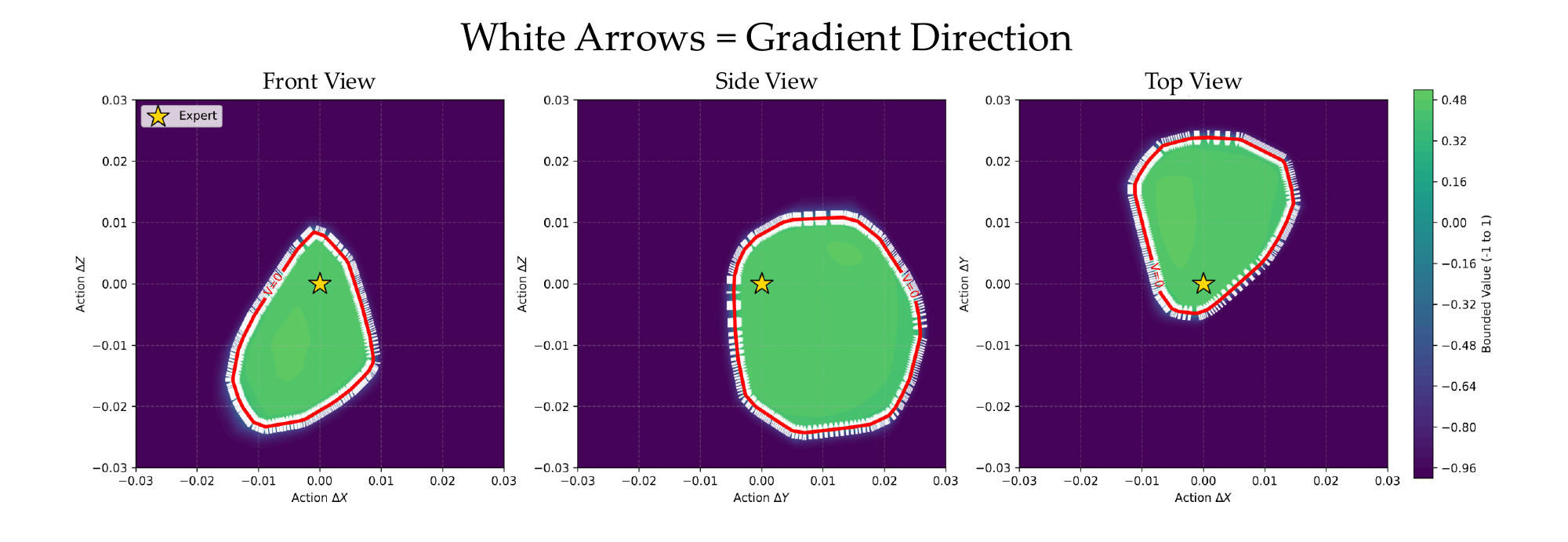}
    \caption{\textbf{Q-Function Landscape.} Multi-view 2D projections show the bounded safe set (green, $Q \geq 0$) in action space. White arrows indicate $\nabla_a Q$ pointing inward, enabling gradient-based recovery.}
    \label{fig:safeset_comparison}
\end{figure}

\begin{figure}[t]
    \centering
    \includegraphics[width=0.80\columnwidth]{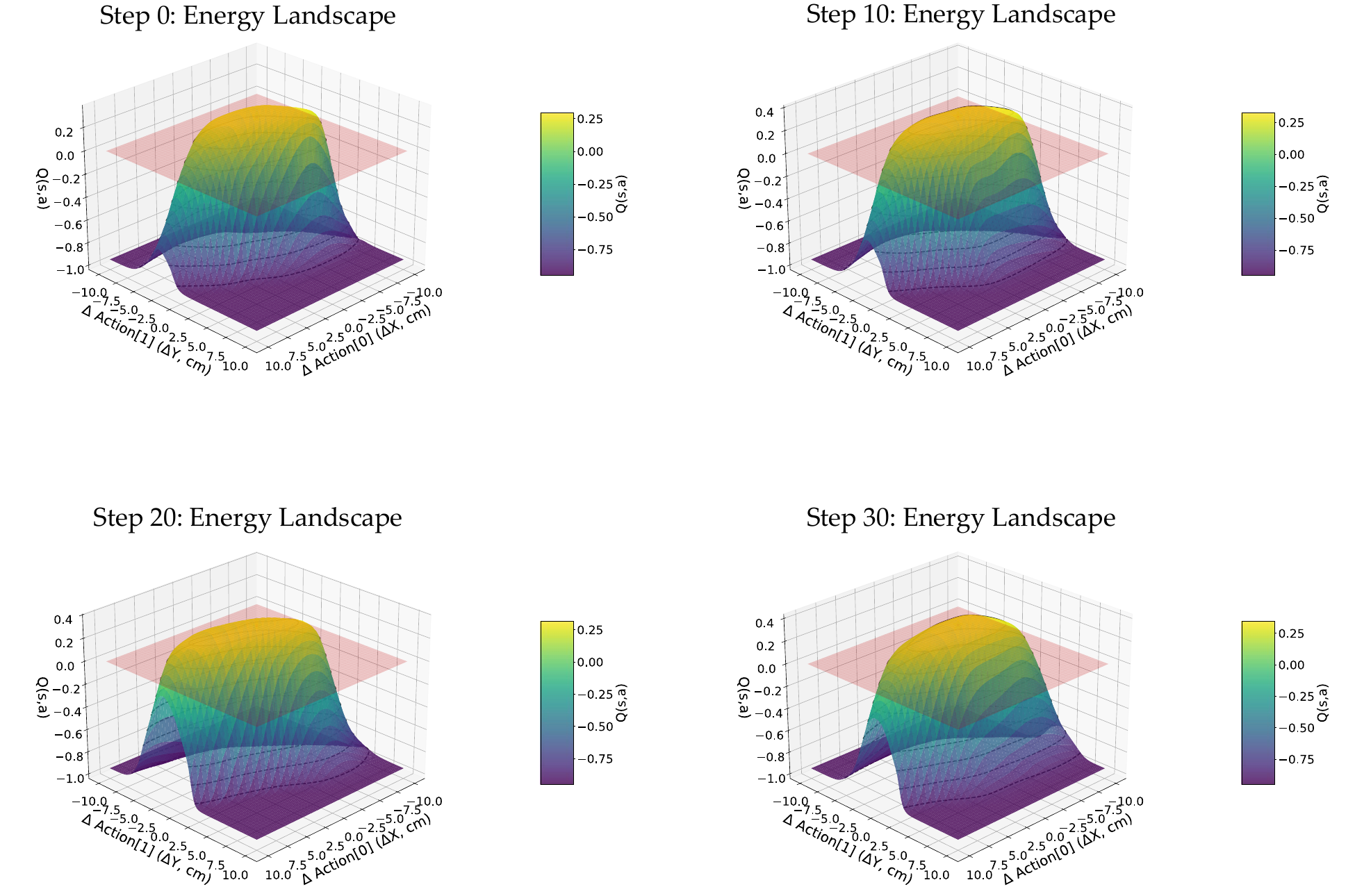}
    \caption{\textbf{Q-Function as Bounded Hill.} The Q-function peaks at expert actions and decays smoothly, ensuring $\nabla_a Q$ points toward safe configurations for gradient-based recovery.}
    \label{fig:q_landscape}
\end{figure}

\subsubsection{Learning Q as an energy function}
Regressing Q to reach-avoid targets provides correct predictions but insufficient gradients for recovery. In regions far from $\mathcal{D}$, standard regression produces flat or noisy gradients---precisely where correction is needed most. To address this, we shape $Q(s, a)$ as an energy function with a well-defined landscape (Figure~\ref{fig:q_landscape}). State-action pairs from successful trajectories represent local safety optima. We impose a \textit{Bounded Hill} geometry: Q peaks at demonstrated $(s, a)$ pairs and decays smoothly as actions deviate. This is enforced via hinge loss:
\begin{equation}
    \mathcal{L}_{hill} = \mathbb{E}_{(s,a^*) \sim \mathcal{D}} \left[ \max(0, Q(s, a) - (1 - \alpha \|a - a^*\|_2)) \right]
\end{equation}
where $a^*$ is the action from a successful trajectory at state $s$, and $\alpha > 0$ controls decay rate. During training, we sample $a$ by adding uniform noise $\mathcal{U}(-0.1, 0.1)$ to $a^*$, ensuring the hill is shaped in a neighborhood around demonstrated actions rather than globally. This prevents distorted level sets far from data. The complete objective combines reach-avoid regression with this constraint:
\begin{equation}
    \mathcal{L}_Q = \mathcal{L}_{anchor} + \lambda_{hill}\mathcal{L}_{hill}
\end{equation}
where $\lambda_{hill} > 0$ balances reach-avoid accuracy against gradient quality (we use $\lambda_{hill} = 0.1$). Larger values produce stronger gradients but may distort level sets; smaller values preserve fidelity but slow recovery. \new{Because $\mathcal{L}_{hill}$ acts only on the gradient landscape around demonstrated actions and the reach-avoid loss $\mathcal{L}_{anchor}$ remains dominant, the decision boundary $\{Q\!=\!0\}$ is preserved (validated by 99.3\% AUROC, Table~\ref{tab:calibration}).} We apply hill shaping only at states from successful trajectories to prevent artificial safe regions (see Appendix~\ref{app:energy} for details). This achieves 100\% recovery success with 99.3\% state-level accuracy (Table~\ref{tab:weightnet_ablation}).

This formulation ensures that $\nabla_{a}Q$ is non-vanishing and points toward safe configurations, even in states not encountered during training (Figure~\ref{fig:safeset_comparison}). To ensure these gradients remain stable under small perturbations, we additionally enforce Lipschitz continuity.

\subsubsection{Lipschitz continuity of Q}
Since we will leverage Q's gradient to design a recovery controller, if Q exhibits sharp discontinuities, the gradient $\nabla_a Q$ can change erratically with small changes in the action, causing oscillatory or unpredictable corrections to the robot's end-effector commands. This makes it difficult to ensure reliable convergence to a safe action. We enforce smoothness by implementing the Q-function as a 4-layer MLP with spectral normalization~\cite{miyato2018spectral} on each layer. Each weight matrix $W$ is normalized by its largest singular value $\sigma(W)$, estimated via power iteration. We use Softplus activations ($\beta=5.0$) instead of ReLU for gradient continuity. This bounds the global Lipschitz constant $L_Q \leq 2.5$:
\begin{equation}
    \|\nabla_{a}Q_{\phi}(s, a)\|_2 \leq L_Q = 2.5 \quad \forall (s, a)
\end{equation}
We validated this bound over 10,000 samples (max: 2.31, mean: 0.87); see Appendix~\ref{app:lipschitz}. This constraint bounds the recovery process (\textbf{Proposition 1}). 

\textbf{Remark on Invariance.} We use the term ``empirical safe set'' rather than claiming formal control invariance. If $Q(s,a) \geq 0$, the training data indicates the policy can maintain safety---but this is an \textit{empirical} guarantee contingent on Q's accuracy and the runtime environment matching the training distribution. The gradient-based recovery uses $\nabla_a Q$ as a \textit{heuristic proxy} for safety-increasing directions; without explicit dynamics, we cannot enforce true tangentiality conditions.

\begin{figure*}[!t]
\centering
\includegraphics[width=0.70\textwidth]{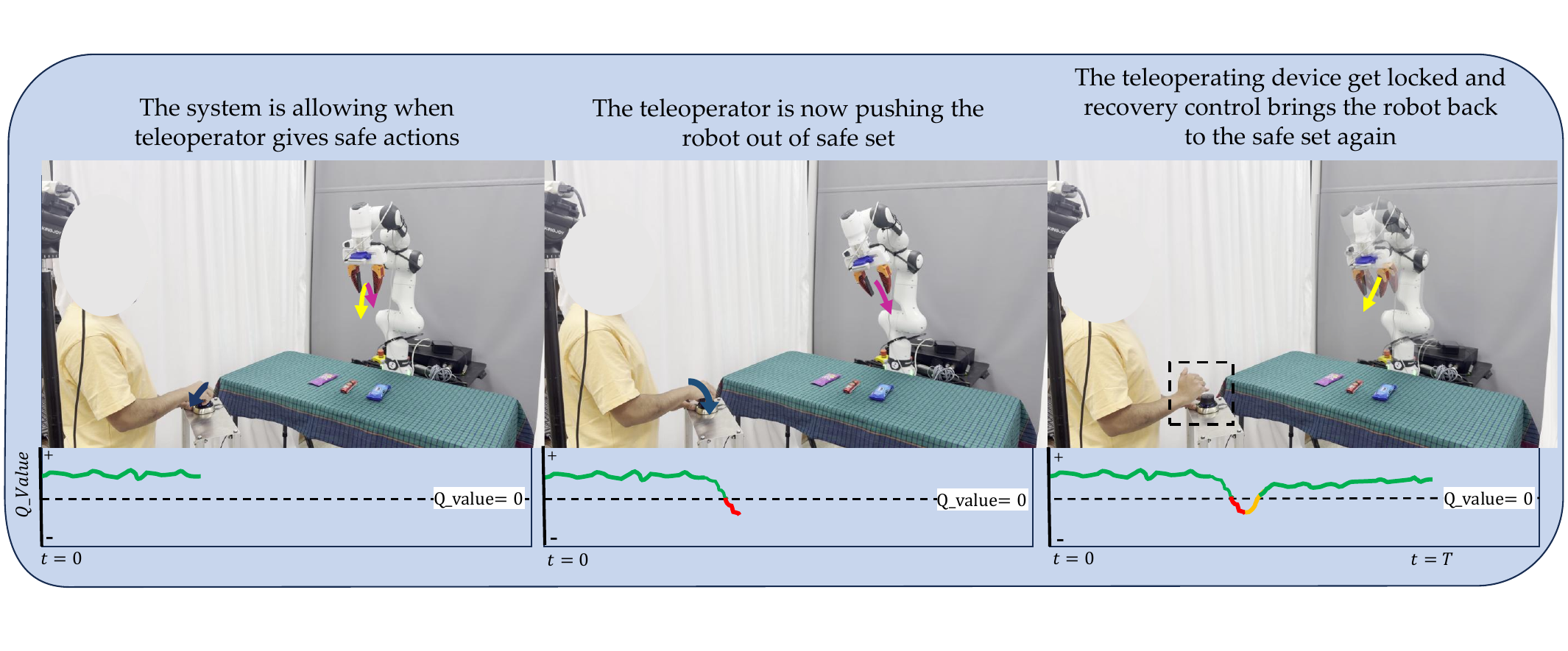}
\caption{\textbf{TAIL-Safe Recovery During Teleoperation.} \textbf{(Left)} Safe actions with $Q > 0$. \textbf{(Middle)} Teleoperator pushes robot out of safe set. \textbf{(Right)} TAIL-Safe locks device and activates recovery, steering back to safety. Bottom plots show Q-value trajectory recovering to positive values.}
\label{fig:wide_figure}
\end{figure*}

\subsection{Designing a Recovery Controller to Maintain Task Success}
\label{sec:recovery_controller}
TAIL-Safe monitors all proposed actions under $\pi$ to ensure task-success assurance during deployment (as illustrated in Figure~\ref{fig:teaser}). At any time, %the filter evaluates $Q(s_t, a_\pi)$ for the proposed action $a_\pi = \pi_\theta(s_t)$. 
if $Q(s, a) \geq 0$, the action lies within $\mathcal{C}_{safe}$ and executes unmodified; if $Q(s, a) < 0$, the filter initiates a gradient-based recovery. The proposed recovery is inspired by Nagumo's tangentiality condition \cite{nagumo1942lage,blanchini2008set}, which states that a set is forward-invariant if boundary controls point inward. Since we lack explicit dynamics, we use $\nabla_a Q$ as a proxy for the safety-increasing direction via projected gradient ascent:
\begin{equation}
    a^{(k+1)} = \text{Proj}_{\mathcal{A}} \left( a^{(k)} + \eta \frac{\nabla_{a} Q(s_t, a^{(k)})}{\|\nabla_{a} Q(s_t, a^{(k)})\|_2} \right)
\end{equation}
where $\text{Proj}_{\mathcal{A}}$ clips to kinematic limits. Starting from $a^{(0)} = a_\pi$, we iterate until $Q \geq 0$ or halt after $k_{max} = 10$ iterations. We set $\eta = 0.05$ based on grid search; convergence typically requires 3--5 iterations (mean: 2.3), enabling 20Hz operation. See Appendix~\ref{app:hyperparams} for hyperparameter sensitivity.

\begin{proposition}[Bounded Recovery Step]
\label{prop:bounded_recovery}
Under Lipschitz constraint $L_Q$, the recovery update $\Delta a = \eta \nabla_a Q / \|\nabla_a Q\|_2$ satisfies $\|\Delta a\|_2 = \eta$. The safety improvement per step is lower-bounded: $Q(s, a + \Delta a) - Q(s, a) \geq \eta \cdot c$ for some $c > 0$ when $\nabla_a Q \neq 0$.
\end{proposition}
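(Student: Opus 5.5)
The first claim is immediate: since $\Delta a = \eta\, g/\|g\|_2$ with $g := \nabla_a Q(s,a) \neq 0$, homogeneity of the norm gives $\|\Delta a\|_2 = \eta\,\|g\|_2/\|g\|_2 = \eta$. For the improvement bound, a Lipschitz bound on $Q$ alone is not enough, since it only controls $|Q(s,a+\Delta a)-Q(s,a)| \le L_Q\eta$ and says nothing about the sign. The natural route is a first-order expansion controlled by the smoothness of $\nabla_a Q$. Because $Q$ is a spectrally normalized MLP with Softplus ($\beta = 5$) activations, it is $C^2$. Its action-gradient is therefore Lipschitz on the relevant set, with some constant $M$ bounded by products of layer norms and $\sup|\sigma''| = \beta/4$.

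\emph{Step 1 (descent-lemma expansion).} Write
\begin{equation*}
Q(s,a+\Delta a) - Q(s,a) = \int_0^1 \nabla_a Q(s,a+t\Delta a)^\top \Delta a \, dt .
\end{equation*}
Add and subtract $g^\top \Delta a = \eta\|g\|_2$, and bound the remainder using the gradient Lipschitz constant $M$ and Cauchy--Schwarz. This gives
\begin{equation*}
Q(s,a+\Delta a) - Q(s,a) \ge \eta\|g\|_2 - \tfrac{M}{2}\eta^2 .
\end{equation*}

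\emph{Step 2 (choice of $c$).} Set $c := \|g\|_2 - \tfrac{M}{2}\eta$. This is positive exactly when $\eta < 2\|g\|_2/M$. So for a fixed nonzero gradient the statement holds with this $c$, provided the step size is small relative to the local gradient magnitude. If one wants a $c$ uniform over the recovery region, assume a lower bound $\|\nabla_a Q\|_2 \ge g_{\min} > 0$ on the region $\{Q<0\}$ explored by recovery. This is the property the Bounded Hill loss $\mathcal{L}_{hill}$ is designed to enforce, since it makes the gradient non-vanishing near demonstrated actions. Then take $c = g_{\min} - M\eta/2$, which is positive for $\eta < 2g_{\min}/M$. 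The upper bound $c \le L_Q = 2.5$ follows from $\|g\|_2 \le L_Q$, which confirms consistency with the Lipschitz constraint stated earlier.

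\emph{Main obstacle.} Stated literally, with only $\nabla_a Q \neq 0$ and a fixed $\eta$, the claim can fail: a single normalized step of size $0.05$ can overshoot a sharp local maximum. The proof must therefore make explicit the implicit hypothesis that $\eta$ is small relative to $\|\nabla_a Q\|_2/M$, or use the hill-induced gradient floor $g_{\min}$. I would first try to bound $M$ explicitly via the spectral norms, the Softplus curvature $\beta/4$, and the composition rule, and then state the proposition with this step-size condition.

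Finally, account for the projection $\text{Proj}_{\mathcal{A}}$. If $a + \Delta a$ stays in the interior of $\mathcal{A}$ the bound is unchanged. At the kinematic limits, the bound would have to be restated for the projected direction, or the result would simply be stated for interior iterates.
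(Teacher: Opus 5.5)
Your argument follows the paper's proof: a second-order expansion giving $Q(s,a+\Delta a)-Q(s,a)\ge \eta\|\nabla_a Q\|_2-\tfrac{K}{2}\eta^2$, then $c=\|\nabla_a Q\|_2-K\eta/2>0$ under a small-step condition on $\eta$, which the paper also needs (it concludes ``for sufficiently small $\eta$''). The only difference is the constant $K$: the paper takes $K=L_Q$ and credits the quadratic remainder bound to Lipschitz continuity of $Q$, which does not follow from $\|\nabla_a Q\|_2\le L_Q$ alone, whereas your gradient-Lipschitz constant $M$ (available because the Softplus network is $C^2$) is what that step actually requires, so yours is the correct form of the same proof.
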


\noindent The proof is in Appendix~\ref{app:proof}. Proposition~\ref{prop:bounded_recovery} ensures predictable step magnitude $\eta$ and expected progress toward $\mathcal{C}_{safe}$ when the gradient is non-zero.
\section{Experimental Evaluation}
\label{sec:experiments}
We evaluate TAIL-Safe on a Franka Emika robot in simulation and real-world settings with object displacement ($\pm 5$cm), rotation ($\pm 30^\circ$), and off-nominal starting configurations. Our experiments address: (1) Does TAIL-Safe enable consistent success? (Section~\ref{sec:exp_success_rate}) (2) What perturbations can it handle? (Section~\ref{sec:exp_resilient}) (3) How do components ($s_{fov}, s_{rec},$ and $s_{grasp}$) contribute? (Section~\ref{sec:exp_ablation})

\subsection{Tasks and Experimental Configuration}
\label{sec:exp_setup}

\begin{figure}[t]
    \centering
    \includegraphics[width=0.42\textwidth]{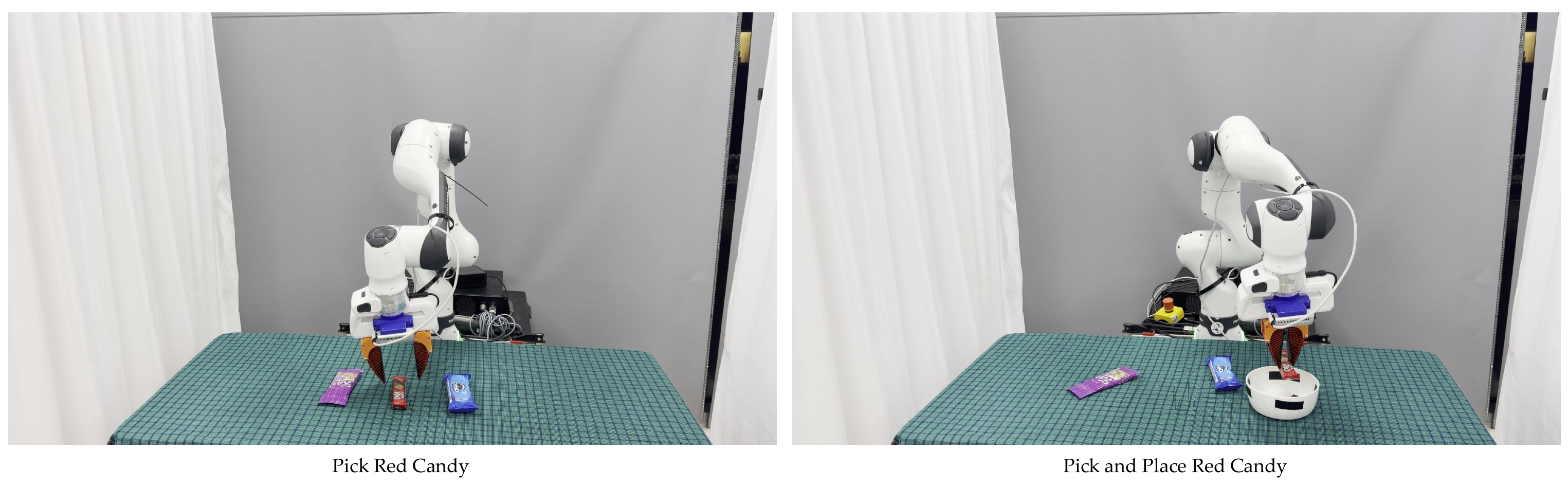}
    \caption{\textbf{Experimental Tasks.} Real robot setup for (left) Candy Picking and (right) Pick-and-Place under object perturbations ($\pm 5$cm, $\pm 30^\circ$).}
    \label{fig:tasks}
\end{figure}

% \begin{figure}[t]
%     \centering
%     % Top Row
%     \begin{subfigure}{\columnwidth}
%         \centering
%         \includegraphics[width=\linewidth]{images/pick_red_candy_task-compressed.pdf}
%         % \caption{\textbf{Pick Red Candy} }
%         \label{fig:failures_pick}
%     \end{subfigure}
    
%     \vspace{-1em}

%     % Bottom Row
%     \begin{subfigure}{\columnwidth}
%         \centering
%         \includegraphics[width=\linewidth]{images/pick_and_place_red_candy_task-compressed.pdf}
%         % \caption{\textbf{(Pick-and-Place)}}
%         \label{fig:failures_place}
%     \end{subfigure}
%     \vspace{-2.5em}
%     \caption{\textbf{Types of Tasks}}
%     \label{fig:failures}
% \end{figure}

We evaluate TAIL-Safe on two tabletop manipulation tasks (Figure~\ref{fig:tasks}).

\textbf{Candy Picking.} The robot grasps a specific candy from a cluttered workspace with distractors. We perturb objects ($\pm 5$cm displacement, $\pm 30^\circ$ rotation) from demonstration poses.

\textbf{Pick-and-Place.} The robot transfers an object to a designated bowl. The extended horizon makes this task more susceptible to compounding drift.

For each task, we collect 45 kinesthetic demonstrations and generate $\sim$500 perturbed rollouts in the Gaussian Splatting simulator. The robot operates at 20Hz with RGB-D from a wrist-mounted camera (43ms latency on RTX 4090). State space: 149D (128 visual + 21 proprioceptive); action space: 8D (7-DoF delta + gripper). WeightNet: 3-layer Lipschitz MLP (256 units); Q-function: 4-layer Lipschitz MLP (512 units) with spectral normalization ($L_Q \leq 2.5$) and Softplus activations. Q-labels computed via reach-avoid hindsight: $y_t = \min_{k \geq t} h(s_k, a_k)$.

\subsection{Policy Failure vs. TAIL-Safe Guided Success}
\label{sec:exp_success_rate}
Figure~\ref{fig:failures} illustrates failure modes in the vanilla policy: translation errors, incorrect orientations, wrong object selection, and inability to reach goals. Experiments demonstrate that flow-matching policies achieve consistent success when guided by TAIL-Safe.  We evaluate 200 rollouts per condition in simulation and 50 on the real robot.

% TODO: Add figure showing side-by-side comparison of failed vs successful rollouts
% Figure placeholder for qualitative visualization

\begin{figure}[t]
    \centering
    % Top Row
    \begin{subfigure}{0.72\columnwidth}
        \centering
        \includegraphics[width=\linewidth]{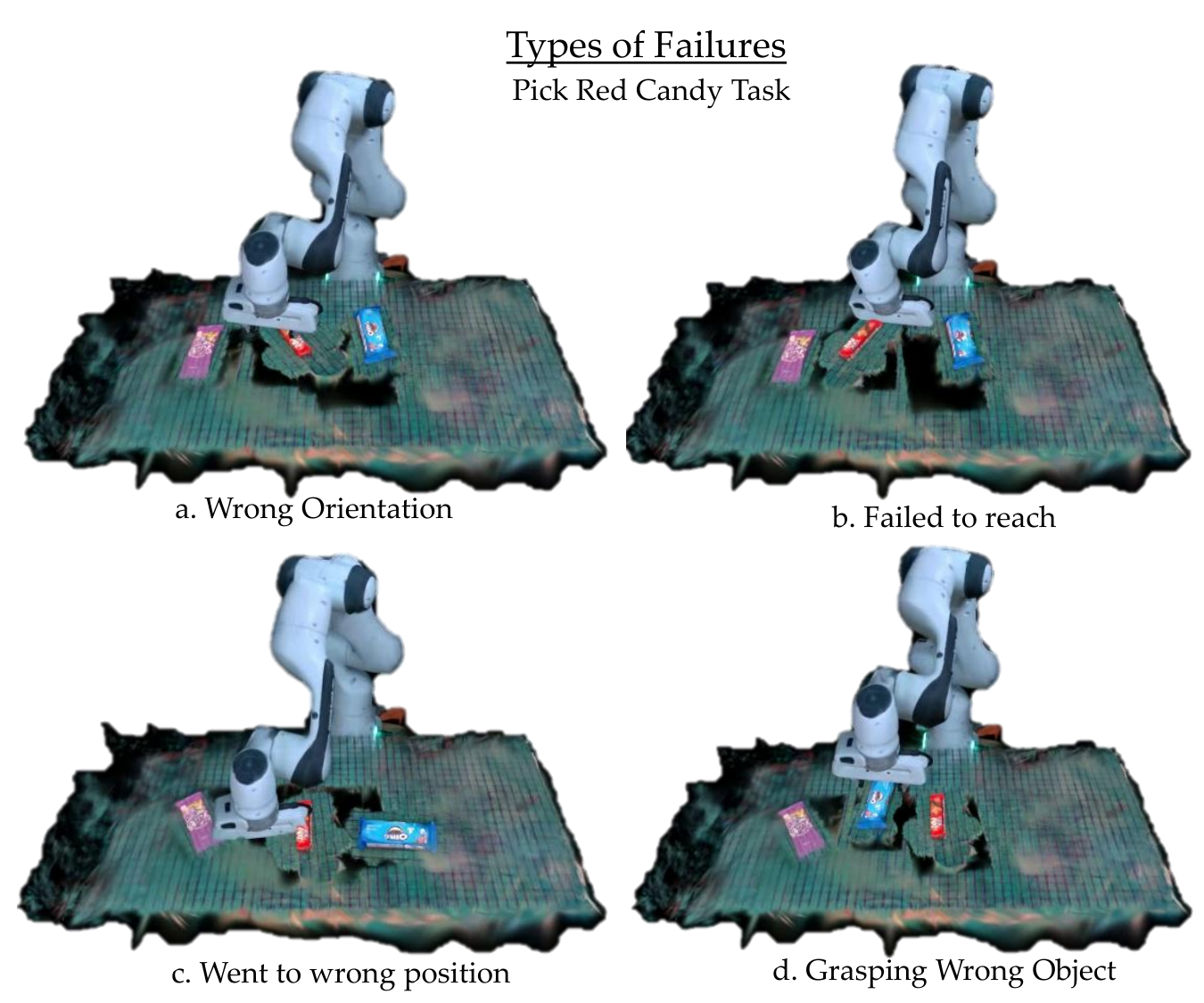}
        \caption{\textbf{Pick Red Candy:} (a) Wrong grasp orientation. (b) Fails to reach goal. (c) Translation error causes grasp failure. (d) Wrong candy selected.}
        \label{fig:failures_pick}
    \end{subfigure}
    
    \vspace{0.3em}

    % Bottom Row
    \begin{subfigure}{0.72\columnwidth}
        \centering
        \includegraphics[width=\linewidth]{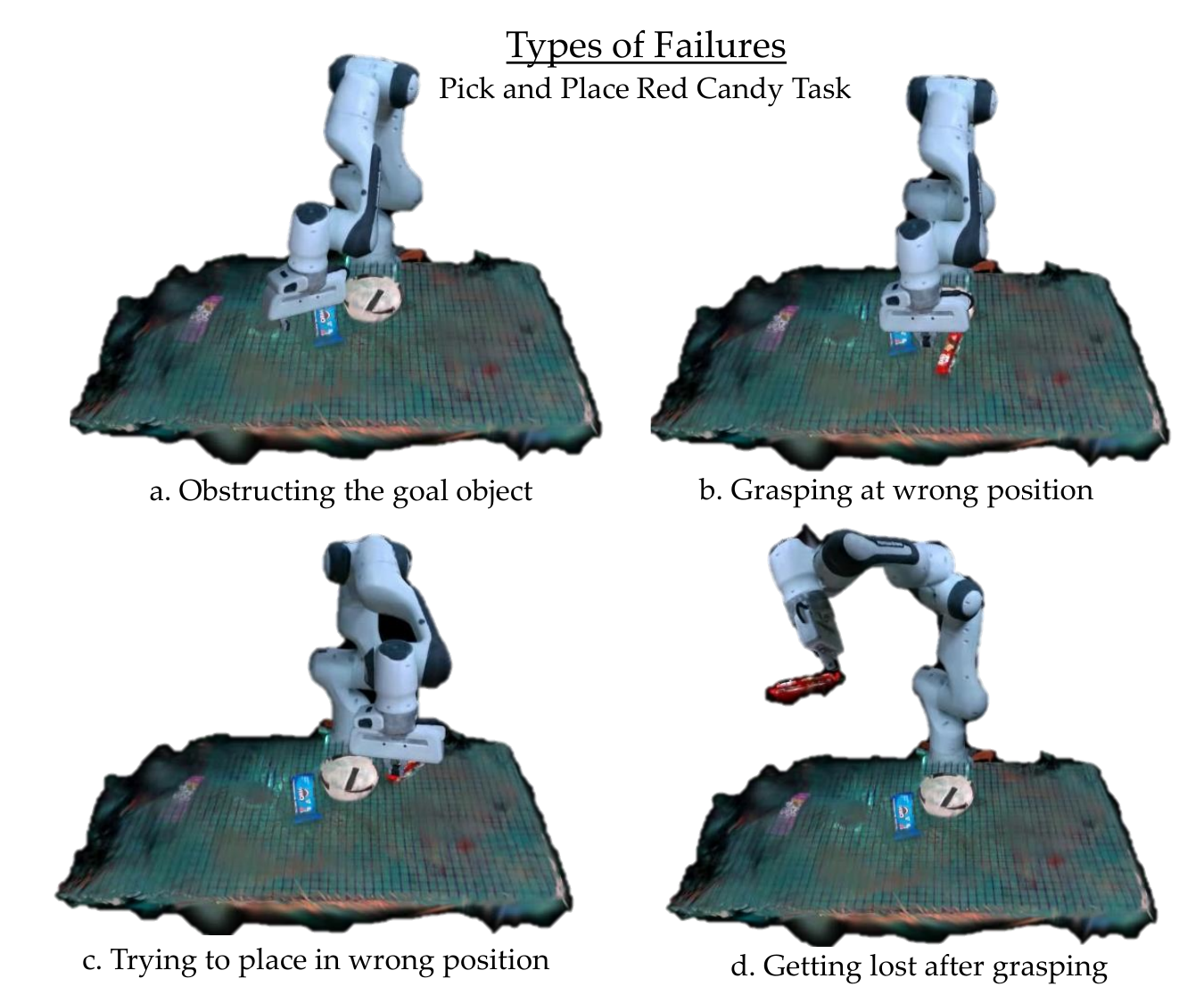}
        \caption{\textbf{Pick-and-Place:} (a) Gripper occludes target. (b) Wrong grasp position. (c) Placement failure. (d) Gets lost after picking.}
        \label{fig:failures_place}
    \end{subfigure}

    \caption{\textbf{Failure Modes in Vanilla Flow-Matching Policy.}}
    \label{fig:failures}
\end{figure}

% \begin{figure}[t]
%     \centering
%     \includegraphics[width=0.50\textwidth]{images/failures.pdf}
%     \caption{\textbf{Types of Failures in Vanilla Flow-Matching Policy.} \textbf{Top row (Pick Red Candy):} (a) Translation error propagates, causing grasp failure. (b) Policy fails to predict correct grasp orientation when candy is rotated. (c) Policy selects wrong candy to pick. (d) Policy fails to reach the goal position. \textbf{Bottom row (Pick-and-Place):} The task inherits all failure modes from picking, plus additional challenges. (e) Gripper occludes the target candy, causing the policy to lose track of the object. (f) After grasping, policy moves toward wrong direction instead of the bowl. (g) Policy fails during the final placement phase. (h) Policy successfully picks the candy but gets lost, moving to an unexpected pose and failing to locate the bowl.}
%     \label{fig:failures}
% \end{figure}

\begin{table}[t]
    \centering
    \caption{\textbf{Task Performance: Simulation and Real Robot}}
    \label{tab:performance_results}
    \footnotesize
    \setlength{\tabcolsep}{3pt}
    \renewcommand{\arraystretch}{1.2}
    \begin{tabular}{l l c c c c}
        \toprule
        \textbf{Task} & \textbf{Condition} & \textbf{N} & \textbf{Success} & \textbf{Steps} & \textbf{Recov.} \\
        \midrule
        \multicolumn{6}{c}{\textit{Simulation}} \\
        \midrule
        \multirow{2}{*}{Candy Pick} & Unsafe (No Safety) & 200 & 20.0\% & 94.2 $\pm$ 7.8 & --- \\
        & Safe (With Safety) & 200 & \textbf{100\%} & \textbf{74.3 $\pm$ 5.9} & 0.63 $\pm$ 0.7 \\
        \midrule
        \multirow{2}{*}{Pick-Place} & Unsafe (No Safety) & 200 & 23.3\% & 92.8 $\pm$ 8.6 & --- \\
        & Safe (With Safety) & 200 & \textbf{100\%} & \textbf{76.9 $\pm$ 6.8} & 0.70 $\pm$ 0.9 \\
        \midrule
        \multicolumn{6}{c}{\textit{Real Robot}} \\
        \midrule
        \multirow{2}{*}{Candy Pick} & Unsafe (No Safety) & 50 & 25.0\% & 91.5 $\pm$ 9.1 & --- \\
        & Safe (With Safety) & 50 & \textbf{100\%} & \textbf{78.2 $\pm$ 7.3} & 0.80 $\pm$ 0.8 \\
        \midrule
        \multirow{2}{*}{Pick-Place} & Unsafe (No Safety) & 50 & 20.0\% & 95.3 $\pm$ 8.4 & --- \\
        & Safe (With Safety) & 50 & \textbf{100\%} & \textbf{81.4 $\pm$ 7.9} & 0.85 $\pm$ 1.0 \\
        \bottomrule
    \end{tabular}
\end{table}
% \begin{figure}[t]
%     \centering
%     \includegraphics[width=0.50\textwidth]{images/recovery_roll.pdf}
%     \caption{\textbf{Gradient-Based Recovery.} When $Q(s,a)$ becomes negative, the runtime filter applies gradient-based corrections that project the action back toward the safe set.}
%     \label{fig:recovery_roll}
% \end{figure}
% \begin{table}[t]
%     \centering
%     \caption{\textbf{Safety Q-Function Evolution}}
%     \label{tab:safety_stats}
%     \footnotesize
%     \setlength{\tabcolsep}{3.5pt}
%     \renewcommand{\arraystretch}{1.3}
%     \begin{tabular}{l c c c}
%         \toprule
%         & \multicolumn{3}{c}{\textbf{Safety Value Statistics} (Mean $\pm$ Std)} \\
%         \cmidrule(lr){2-4}
%         \textbf{Condition} & \textbf{$Q_{start}$} & \textbf{$Q_{min}$} & \textbf{$Q_{final}$} \\
%         \midrule
%         Without TAIL-Safe & $-0.44 \pm 0.01$ & $-1.0 \pm 0.00$ & $-1.0 \pm 0.00$ \\
%         With TAIL-Safe & $+0.27 \pm 0.02$ & $-0.19 \pm 0.13$ & $+0.92 \pm 0.02$ \\
%         \bottomrule
%     \end{tabular}
% \end{table}

\begin{table}[h]
\centering
\caption{\textbf{WeightNet vs Equal Weights for Safety Detection.} WeightNet achieves near-perfect detection at both trajectory and state levels, while Equal Weights fails to identify safe regions (0\% trajectory, 1.9\% state recall). See Table~\ref{tab:weightnet_ablation} for full details.}
\label{tab:weightnet_detection}
\resizebox{\linewidth}{!}{
\begin{tabular}{@{}lcccc@{}}
\toprule
& \multicolumn{2}{c}{\textbf{Trajectory-Level}} & \multicolumn{2}{c}{\textbf{State-Level}} \\
\cmidrule(lr){2-3} \cmidrule(lr){4-5}
\textbf{Method} & \textbf{Safe Recall} & \textbf{Unsafe Recall} & \textbf{Safe Recall} & \textbf{Unsafe Recall} \\
\midrule
\textbf{WeightNet} & \textbf{100.0\%} & 98.3\% & \textbf{98.4\%} & 100.0\% \\
Equal Weights & 0.0\% & 100.0\% & 1.9\% & 100.0\% \\
\bottomrule
\end{tabular}
}
\end{table}

Table~\ref{tab:performance_results} summarizes results. The vanilla policy achieves 20--25\% success; with TAIL-Safe, it achieves 100\% on both tasks with more efficient episodes (74--81 vs. 92--95 steps) and $<$1 recovery per episode. We deploy directly on the physical robot without fine-tuning, achieving consistent 100\% success. Real-world requires slightly more recoveries (0.80--0.85 vs. 0.63--0.70) due to sensor noise. \textbf{Inference latency:} The reported 2.8ms covers Q-forward pass (1.2ms), gradient computation (0.9ms), and recovery iteration (0.7ms). Perception (SAMv2, DINOv2) runs asynchronously at 10Hz; visibility, graspability and recognizability scores are cached and updated per perception cycle, not per control step.

\subsection{Determining the Resilience of  TAIL-Safe to Different Type of Runtime Perturbations}
\label{sec:exp_resilient}

We qualitatively demonstrate TAIL-Safe's resilience to runtime perturbations. Figure~\ref{fig:sim_roll} shows the key difference: without TAIL-Safe, Q remains negative until failure; with TAIL-Safe, recovery activates when $Q < 0$ and steers back to safety.

% TODO: Add figure showing examples of perturbations that TAIL-Safe handles successfully
% Categories to visualize:
% - Object displacement (within ±5cm)
% - Object rotation (within ±30°)
% - Partial visual occlusion
% - Sensor noise
% - Initial position variations

\begin{figure}[t]
    \centering
    \begin{subfigure}[b]{0.40\columnwidth}
        \centering
        \includegraphics[width=\textwidth]{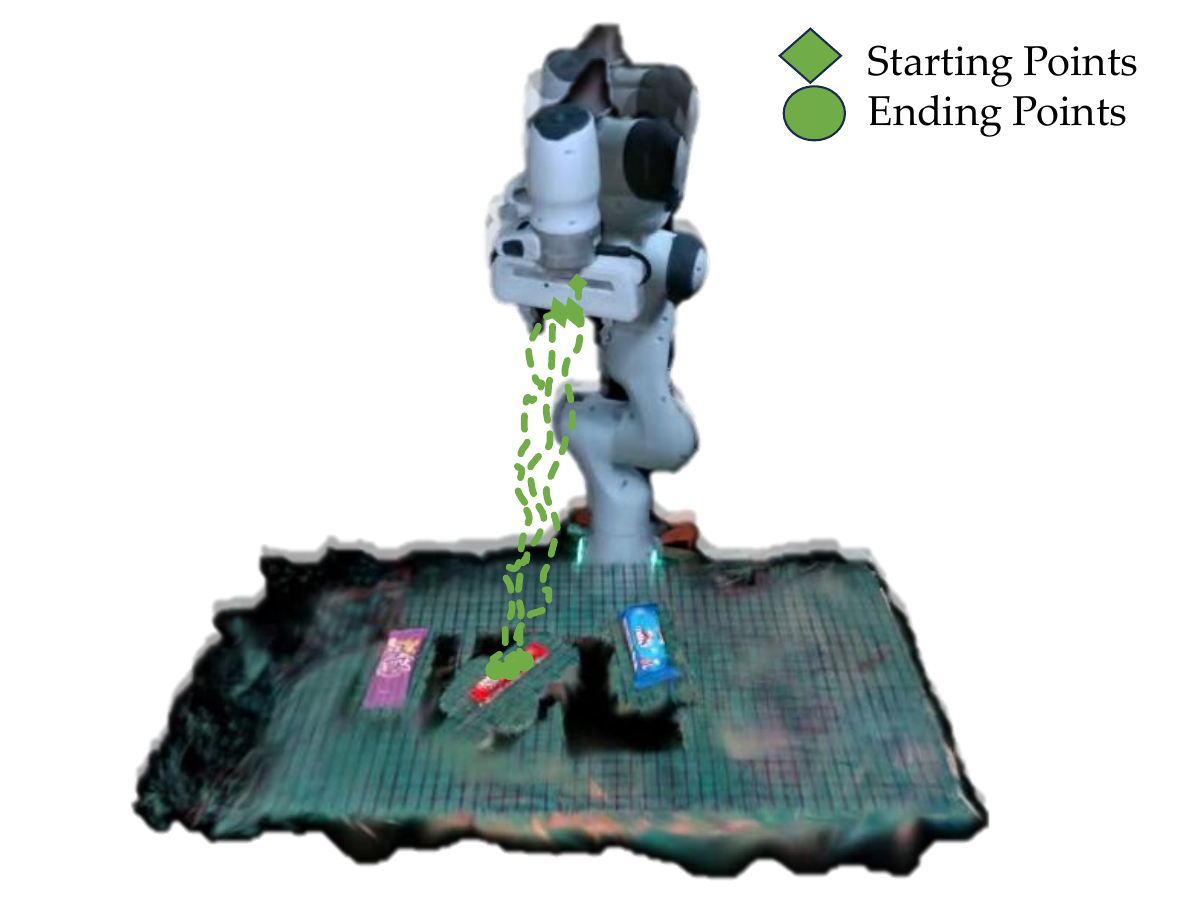}
        \caption{Safe Rollouts}
        \label{fig:safe_rollouts}
    \end{subfigure}
    \hfill
    \begin{subfigure}[b]{0.40\columnwidth}
        \centering
        \includegraphics[width=\textwidth]{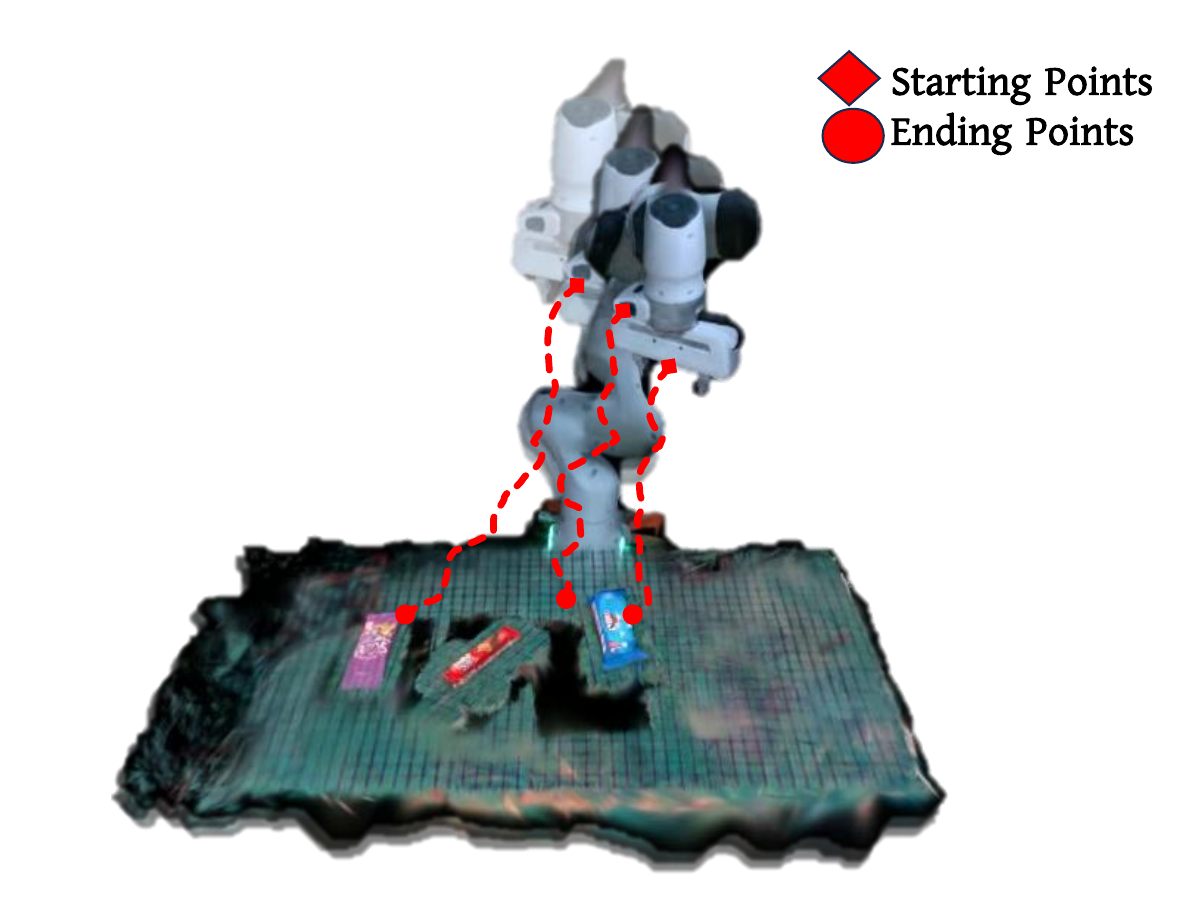}
        \caption{Unsafe Rollouts}
        \label{fig:unsafe_rollouts}
    \end{subfigure}
    \caption{\textbf{Trajectory Distribution.} \textbf{(a)} Safe initializations produce clustered paths reaching the target. \textbf{(b)} Unsafe initializations produce scattered, erratic paths.}
    \label{fig:trajectory_distribution}
\end{figure}

\begin{figure}[t]
    \centering
    \includegraphics[width=0.49\textwidth]{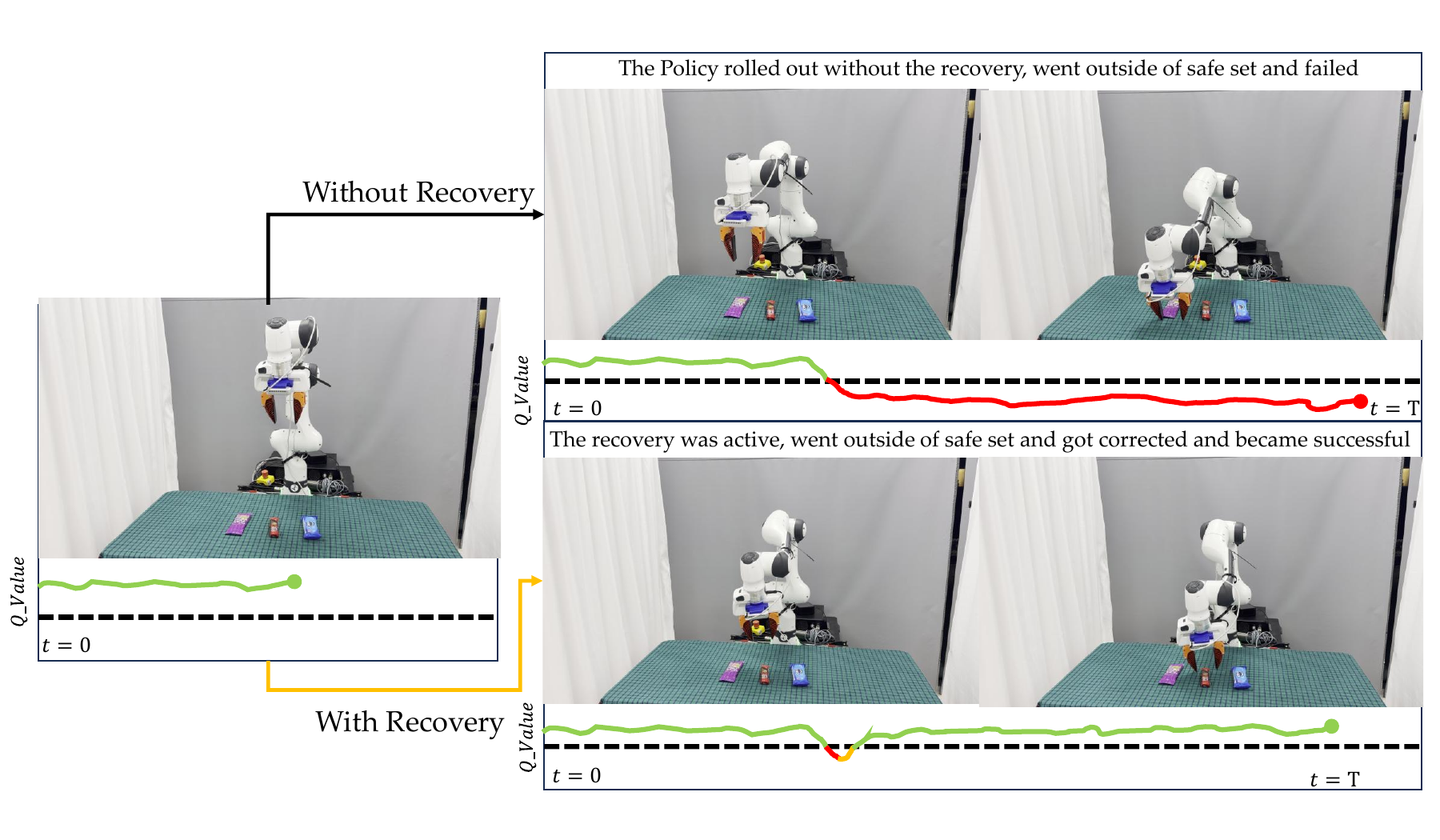}
    \caption{\textbf{Safe Set and Q-Value Propagation.} \textbf{(a)} Without TAIL-Safe: Q-value remains negative, task fails. \textbf{(b)} With TAIL-Safe: recovery controller steers robot back to safety upon detecting $Q < 0$.}
    \label{fig:sim_roll}
\end{figure}

% \begin{figure}[t]
%     \centering
%     \includegraphics[width=0.50\textwidth]{images/recovery_roll.pdf}
%     \caption{\textbf{Safe Set and Q-Value Propagation.} Both trajectories start from the same initial position outside the safe set. \textbf{(a) Without TAIL-Safe:} The robot begins outside the safe set (yellow ellipse) and the Q-value (red curve) remains negative throughout execution. Without recovery, the robot stays outside the closest safe set at every timestep, and the task eventually fails. \textbf{(b) With TAIL-Safe:} Starting from the same unsafe initial position, the recovery controller immediately activates upon detecting $Q(s,a) < 0$. Using the gradient of the Q-value function, the controller steers the robot back into the safe set. Once inside, the Q-value becomes positive (green), the robot remains within the safe set for the remainder of execution, and the task completes successfully.}
%     \label{fig:sim_roll}
% \end{figure}

% Placeholder for detailed qualitative analysis of handled perturbations

%\subsection{Runtime Perturbations TAIL-Safe Cannot Handle}
%\label{sec:exp_limitations}

\begin{figure*}[!t]
\centering
\includegraphics[width=0.85\textwidth]{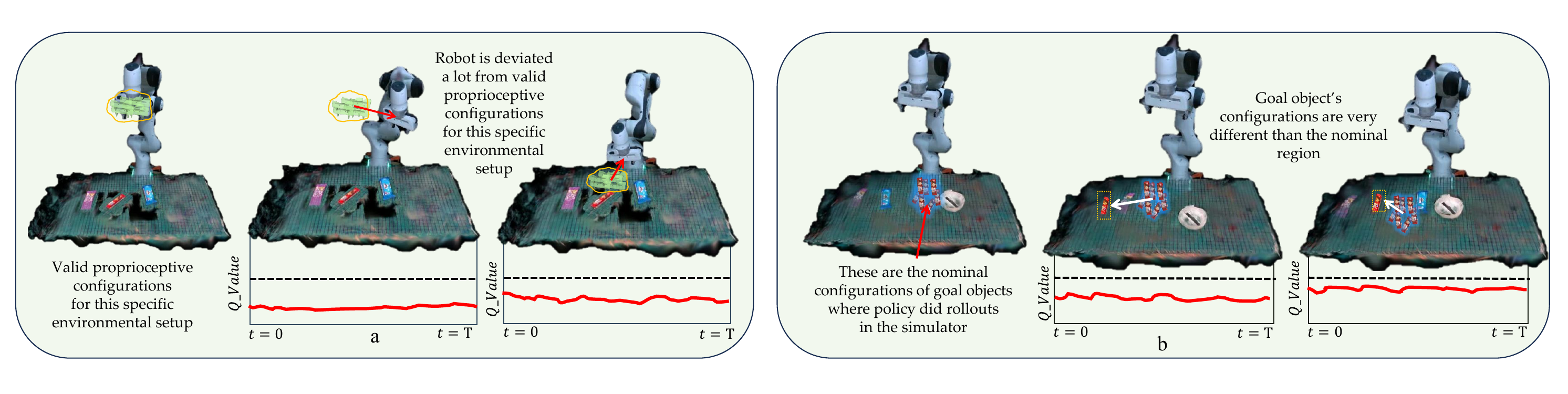}
\caption{\textbf{Perturbations Beyond TAIL-Safe's Capability.} \textbf{(a)} Extreme proprioceptive deviation: robot configuration falls outside training distribution. \textbf{(b)} Extreme object displacement: objects placed far from nominal regions cannot be recognized.}
\label{fig:extreme}
\end{figure*}

% \begin{figure}[t]
%     \centering
%     \includegraphics[width=0.50\textwidth]{images/extreme_ood.pdf}
%     \caption{\textbf{Perturbations Beyond TAIL-Safe's Recovery Capability.} \textbf{(a) Extreme Proprioceptive Deviation:} The yellow ellipse denotes valid proprioceptive configurations for the given environmental setup. When the robot deviates significantly from these valid configurations (right), the state falls far outside the training distribution and no bounded safe set exists. \textbf{(b) Extreme Object Displacement:} The yellow dashed boxes indicate nominal object configurations where the policy was trained in simulation. When goal objects are placed in configurations drastically different from the nominal region, the policy cannot recognize the scene and TAIL-Safe cannot provide recovery assurances.}
%     \label{fig:extreme_ood}
% \end{figure}

We also characterize TAIL-Safe's limitations (Figure~\ref{fig:extreme}). \textbf{Extreme Proprioceptive Deviation:} When robot configuration deviates far from training, no bounded safe set exists. \textbf{Extreme Object Displacement:} Beyond training bounds ($>$5cm, $>$30$^\circ$), the Q-function encounters unseen embeddings, yielding unreliable predictions.

% \begin{figure}[t]
%     \centering
%     \includegraphics[width=0.50\textwidth]{images/recovery.pdf}
%     \caption{\textbf{Recovery Visualization.} When $Q(s,a)$ becomes negative, the runtime filter applies gradient-based corrections that project the action back toward the safe set.}
%     \label{fig:recovery_viz}
% \end{figure}

% Placeholder for detailed qualitative analysis of failure cases

\subsection{\new{Baseline Comparison}}
\label{sec:exp_baselines}

\new{We compare TAIL-Safe against three representative families of safety monitors on our existing 480-episode dataset (59{,}779 state-action pairs from the Candy Pick task). Each baseline is selected to isolate a specific design choice in our pipeline: (i) \textbf{Ensemble disagreement}~\cite{lakshminarayanan2017simple,hoque2021thriftydagger} (5 FlowPolicy seeds with action variance as the unsafety proxy) tests whether a learned safety function is needed at all; (ii) \textbf{Learned CBF}~\cite{xiao2023barriernet} (same Lipschitz architecture as ours, 1.1M parameters, trained with binary cross-entropy on per-step labels but \textit{without} reach-avoid targets, energy shaping, or WeightNet fusion) isolates the contribution of our $Q$-formulation versus a pure barrier-style classifier; (iii) \textbf{Instantaneous $h(s,a)$ only} (the per-step heuristic with no temporal aggregation) isolates the contribution of trajectory-level reasoning. All baselines share the same observation/action space and are evaluated with the same labels.}

\new{Table~\ref{tab:baselines} summarizes the comparison. TAIL-Safe is the only method that simultaneously achieves high detection, smooth recovery gradients, and effective gradient-based correction. Ensemble disagreement is near chance (AUROC 0.525): when an in-distribution policy fails, the seeds tend to agree on the same wrong action, so action variance does not flag the failure; using the ensemble mean as a recovery signal is actively harmful ($\Delta Q{=}{-}0.86$) and inference is roughly 5$\times$ slower. The learned CBF detects unsafe states well (AUROC 0.987) but $97.4\%$ of its gradients in the unsafe region are near-zero ($<\!10^{-2}$), so gradient-ascent recovery succeeds only $6.9\%$ of the time when judged against an oracle $Q(s,a)$. The instantaneous $h(s,a)$ matches per-step AUROC, but on ``deceptive'' states---where the current heuristic looks fine yet the episode later fails---the trajectory-level $Q$ separates safe from unsafe $3.2\times$ better (Cohen's $d$: $0.93$ vs.\ $0.29$). Together these results confirm that the three components of TAIL-Safe (reach-avoid $Q$, energy-shaped Lipschitz landscape, WeightNet fusion) are jointly responsible for the observed recovery quality, not merely any single one of them. Implementation details for each baseline are in Appendix~\ref{app:baselines}.}

\begin{table}[t]
\centering
\caption{\textbf{Baseline comparison on 480 episodes (59{,}779 state-action pairs).} Per-step detection (AUROC), episode-level detection (Ep.\ AUC), fraction of near-zero recovery gradients (Flat$\nabla$), oracle-true recovery rate, per-criterion attribution, and inference latency.}
\label{tab:baselines}
\scriptsize
\setlength{\tabcolsep}{2pt}
\begin{tabular}{@{}lcccccc@{}}
\toprule
\textbf{Method} & \textbf{AUROC} & \textbf{Ep.\ AUC} & \textbf{Flat$\nabla$}$^\dagger$ & \textbf{Recov.}$^\ddagger$ & \textbf{Per-crit.} & \textbf{Latency} \\
\midrule
Ensemble~\cite{lakshminarayanan2017simple,hoque2021thriftydagger} & 0.525 & 0.525 & N/A$^{a}$ & 0.0\%$^{b}$ & $\times$ & 15.0\,ms \\
Learned CBF~\cite{xiao2023barriernet} & 0.987 & 0.989 & 97.4\% & 6.9\% & $\times$ & ${\sim}$3\,ms \\
$h(s,a)$ only & 0.999 & 0.999 & N/A$^{c}$ & N/A$^{c}$ & $\checkmark$ & ${\sim}$1\,ms \\
\textbf{TAIL-Safe} $Q(s,a)$ & \textbf{0.999} & \textbf{1.000} & \textbf{1.1\%} & \textbf{100\%} & $\checkmark$ & \textbf{2.8\,ms} \\
\bottomrule
\end{tabular}
\begin{flushleft}
\scriptsize $^\dagger$Flat$\nabla$: \% of near-zero ($<\!10^{-2}$) gradients in the unsafe region. $^\ddagger$True Recovery: \% of recovered actions with oracle $Q\!>\!0$. $^{a}$No learned safety function; uses action variance. $^{b}$Ensemble-mean recovery is actively harmful ($\Delta Q\!=\!-0.86$). $^{c}$Instantaneous signal; no gradient-based recovery is defined.
\end{flushleft}
\end{table}

\subsection{Ablation Study}
\label{sec:exp_ablation}

We ablate components on Candy Pick using 270 rollouts (152 safe, 118 unsafe) comprising 26,977 state-action pairs.

\subsubsection{WeightNet for Context-Aware Criterion Fusion}

Table~\ref{tab:weightnet_detection} compares learned fusion against fixed equal weights. WeightNet achieves 100\% safe trajectory recall and 98.4\% safe state recall, while equal weights identifies \textit{zero} safe trajectories---visibility dominates during approach but is irrelevant during grasp when occlusion is unavoidable. Detection lead time: 8.2 timesteps (410ms); false positive rate: 1.58\%; false negative rate: 0.84\%.

\subsubsection{Lipschitz Constraint and Energy Shaping for Recovery}

The recovery controller requires bounded gradients for stability and sufficient magnitude for rapid correction. Table~\ref{tab:progressive_improvement} and Figure~\ref{fig:value_landscape} quantify each component's contribution.
\begin{figure}[t]
    \centering
    \includegraphics[width=0.45\textwidth]{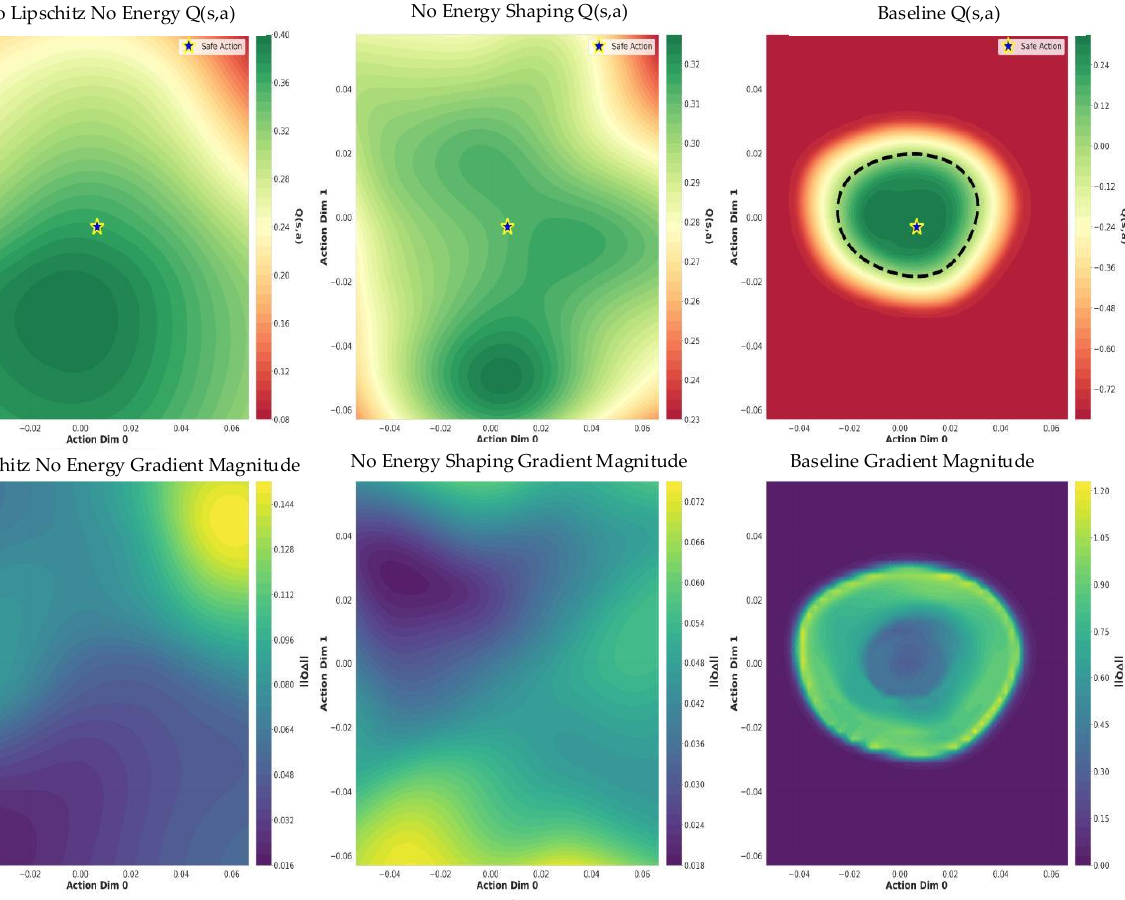}
    \caption{\textbf{Effect of Imposing Lipschitz and Energy Shaping.} \textbf{Left:} No constraints: flat $Q$-landscape. \textbf{Middle:} +Lipschitz: slightly improved. \textbf{Right:} +Energy shaping: bounded hill with strong gradients at safe set boundary.}
    \label{fig:value_landscape}
\end{figure}
Without constraints, sparse rewards produce flat Q-landscape (mean $\|\nabla Q\| = 0.036$), achieving 20\% recovery in 170 steps. Lipschitz improves stability (+15\%), but gradients remain weak. Energy shaping creates 20$\times$ stronger gradients at the safe set boundary, enabling 100\% recovery success in 2 steps. The components are synergistic: Lipschitz ensures \textit{stability}, energy shaping provides \textit{magnitude}.

\begin{table}[h]
\centering
\caption{\textbf{Ablation: Effect of Lipschitz and Energy Shaping on Recovery.} We measure gradient-based recovery success from 200 unsafe initial states. Progressive Improvement Analysis (Max Step number = 200, Step size = 0.2)}
\label{tab:progressive_improvement}
% \resizebox forces the table to fit exactly within the column width
\resizebox{\columnwidth}{!}{
\begin{tabular}{@{}lccc@{}}
\toprule
\textbf{Method} & \textbf{Recovery Success (\%)} & \textbf{Mean Gradient $\nabla Q$} & \textbf{Avg. Step to Correct} \\
\midrule
1. No Constraints & 20 & 0.036 & 170 \\
\midrule
2. + Lipschitz & 35 & 0.041 & 67 \\
\quad \textit{$\llcorner$ Improvement} & \textit{+15\%} & & \\
\midrule
3. + Energy & 100 & 0.802 & 2 \\
\quad \textit{$\llcorner$ Improvement} & \textit{+65\%} & & \\
\bottomrule
\end{tabular}
}
\end{table}

\subsubsection{Q-Function Calibration Analysis}

Table~\ref{tab:calibration} reports classification performance at the $Q=0$ threshold on held-out data. The model achieves AUROC 99.3\% and AUPRC 99.7\%, with false safe rate 0.84\% and false unsafe rate 1.58\%. ECE of 0.37 indicates reasonable calibration, confirming reliable safety classification at deployment.

\begin{table}[t]
\centering
\caption{Q-Function Calibration Metrics}
\label{tab:calibration}
% \scriptsize makes the font significantly smaller (approx 7-8pt)
% We removed \resizebox so it won't stretch the text
\scriptsize
\begin{tabular}{@{}lcc@{}}
\toprule
\textbf{Metric} & \textbf{Value} & \textbf{Interpretation} \\
\midrule
AUROC & 99.3\% & Excellent Discrimination \\
AUPRC & 99.7\% & High Precision-Recall \\
False Safe Rate & 0.84\% & Critical Safety Metric \\
False Unsafe Rate & 1.58\% & Conservatism Penalty \\
ECE & 0.37 & Calibration Error \\
\bottomrule
\end{tabular}
\end{table}

\section{Limitations and Future Work}

TAIL-Safe has several limitations: (1) \textbf{Deterministic policies only}---\new{we target deterministic visuomotor policies based on FlowPolicy, which are currently state of the art for many deployed manipulation settings. Extending TAIL-Safe to stochastic policies is an important direction for future work (Appendix~\ref{app:stochastic})}; (2) \textbf{Fixed safe set}---the offline Q-function cannot adapt to novel perturbations; (3) \textbf{Digital twin fidelity}---our kinematic twin may yield optimistic predictions for contact-rich tasks. We observed 8\% of grasp failures in reality that the twin predicted as successes, suggesting the learned safe set may be slightly optimistic near contact boundaries; \new{(4) \textbf{Scope of evaluation}---we validate on two tabletop tasks with rigid objects and limited distractors; articulated objects would require a deformable/4D-GS backend (the $Q$, WeightNet, and recovery modules are decoupled from the rendering choice; Appendix~\ref{app:scope}). Section~\ref{sec:exp_baselines} provides comparisons against ensemble, learned-CBF, and instantaneous-$h$ baselines; broader benchmarking against latent safety filters and Sentinel-style detectors remains future work.} 

\section{Conclusion}

We presented TAIL-Safe, a task-agnostic safety watchdog enabling IL policies to offer empirical task success assurance at runtime. Our framework addresses: \textit{what} causes failure (three task-agnostic criteria), \textit{where} failures occur (Lipschitz Q-function defining an empirical safe set), and \textit{how} to prevent them (gradient-based recovery inspired by Nagumo's theorem). Using a Gaussian Splatting digital twin, we safely collect failure data without risking hardware. Experiments demonstrate that flow-matching policies, achieving only 20--25\% success under perturbations, reach 100\% when guided by TAIL-Safe. Our approach provides empirical rather than formal guarantees---holding to the extent runtime matches the training distribution. Nevertheless, the substantial improvement demonstrates practical value for deployment where the perturbation envelope is well-characterized, taking a step toward deploying IL policies where failure carries significant cost.

\section*{Acknowledgments}
This work was supported in part by the National Science Foundation grant IIS 2417003.

%% Use plainnat to work nicely with natbib. 

\bibliographystyle{plainnat}
\bibliography{references}

\section*{Appendix}

\subsection{Proof of Proposition~\ref{prop:bounded_recovery}}
\label{app:proof}

\begin{proof}
The first claim follows directly from the normalization: $\|\Delta a\|_2 = \eta \|\nabla_a Q\|_2 / \|\nabla_a Q\|_2 = \eta$. For the second claim, by the Lipschitz continuity of $Q$, we have $|Q(s, a + \Delta a) - Q(s, a) - \nabla_a Q^\top \Delta a| \leq \frac{L_Q}{2}\|\Delta a\|_2^2$. Substituting $\Delta a = \eta \nabla_a Q / \|\nabla_a Q\|_2$:
\begin{align*}
Q(s, a + \Delta a) - Q(s, a) &\geq \nabla_a Q^\top \Delta a - \frac{L_Q \eta^2}{2} \\
&= \eta \|\nabla_a Q\|_2 - \frac{L_Q \eta^2}{2}
\end{align*}
When $\nabla_a Q \neq 0$, let $g = \|\nabla_a Q\|_2 > 0$. For step sizes $\eta < 2g/L_Q$, the improvement is positive. Setting $c = g - L_Q\eta/2 > 0$ for sufficiently small $\eta$ completes the proof.
\end{proof}

\subsection{Safety Criteria Score Computation}
\label{app:visibility}

\textbf{1) Visibility Score, $s_{fov}$.}
This score ensures the target object remains within the sensor's field of view throughout execution. We project the object's position into the camera frame and compute a geometric score based on the density of visible points and their distance from the image center. This prevents the robot from moving the object into blind spots where control becomes unstable.

Formally, let $p_i$ denote the $i$-th point of the target object's point cloud projected into image coordinates $(u_i, v_i)$. For an image with dimensions $W \times H$, we define the normalized distance from center:
\begin{equation}
    d_i = \sqrt{\left(\frac{u_i - W/2}{W/2}\right)^2 + \left(\frac{v_i - H/2}{H/2}\right)^2}
\end{equation}
The visibility score aggregates spatial proximity to the field-of-view boundary:
\begin{equation}
    s_{fov} = \frac{1}{N}\sum_{i=1}^{N} \max(0, 1 - d_i)
\end{equation}
yielding a normalized value $s_{fov} \in [0, 1]$ that decays as the object approaches the sensor's peripheral regions. Objects near center receive $s_{fov} \approx 1$; those near edges receive lower scores.

\textbf{2) Recognizability Score, $s_{rec}$.}
This score evaluates how well the current visual observation aligns with the training distribution. Rather than training a separate out-of-distribution detector, we extract feature embeddings directly from the pre-trained policy's visual encoder. Specifically, we use the flow-matching policy's internal visual backbone to extract a 128-dimensional latent representation $\phi(o)$ for each observation.

We apply PCA to reduce dimensionality while preserving 95\% of variance, and fit a Gaussian model $(\mu_D, \Sigma_D)$ over the reduced embeddings from expert demonstrations. The recognizability score is derived from the inverse Mahalanobis distance in this feature space:
\begin{equation}
    s_{rec}(s) = \sigma(-d_M(\phi(o), \mu_D, \Sigma_D))
\end{equation}
where $d_M$ is the Mahalanobis distance and $\sigma(\cdot)$ is a sigmoid that maps the distance to $[0, 1]$. This formulation penalizes visual states that are statistically distinct from the expert's experience, with the policy's own visual encoder ensuring alignment between the recognizability score and the features used for action prediction.

\textbf{3) Graspability Score, $s_{grasp}$.}
This score evaluates the geometric quality of potential contact with the target object. We perform semantic segmentation using SAM2~\cite{ravi2024sam2} to isolate the object's point cloud and sample antipodal grasp candidates using established grasp quality metrics~\cite{ten2017grasp}. The score reflects the alignment between the current end-effector pose and the nearest high-quality grasp pose:
\begin{equation}
    s_{grasp}(s, a) = \max_{g \in G} \text{sim}(T_{ee}(s, a), T_g)
\end{equation}
where $G$ denotes the set of valid grasp poses from expert demonstrations, $T_{ee} \in SE(3)$ is the end-effector transform induced by action $a$, $T_g \in SE(3)$ is the grasp transform, and $\text{sim}(\cdot)$ measures pose similarity via weighted translational and rotational distances. This ensures that the robot maintains configurations from which successful grasps remain achievable.

In our tabletop setting with simple, well-separated objects, segmentation errors were rare ($<$2\% of frames). When segmentation fails completely, $s_{grasp}$ defaults to zero, triggering conservative safety behavior.

\subsection{Q-Function Training Details}
\label{app:q_training}
\textbf{Policy Evaluation, Not Optimization.} Our Q-function performs \textit{policy evaluation} for a \textit{fixed} policy $\pi$, not policy optimization. This eliminates the $\max_{a'}$ computation---we use $\pi(s_{t+1})$ instead.

\textbf{Training via Monte Carlo Returns.} We compute targets via backward recursion from terminal outcomes ($+1$ success, $-1$ failure):
\begin{equation}
    y_t = \min\left(h(s_t, a_t),\; \gamma \cdot y_{t+1}\right), \quad y_T = \begin{cases} +1 & \text{task success} \\ -1 & \text{task failure} \end{cases}
\end{equation}
The network regresses these targets: $\mathcal{L}_{anchor} = \mathbb{E}[(Q_\phi(s_t, a_t) - y_t)^2]$.

\subsection{WeightNet Training Details}
\label{app:weightnet}
WeightNet is trained on 270 rollouts (152 safe, 118 unsafe) with binary cross-entropy. It learns to weight 12 reward components, discovering that spatial criteria dominate while temporal criteria receive lower weights. Training: Adam (lr=$10^{-3}$), batch 32, 100 epochs. Achieves 99.3\% accuracy vs. 43.7\% with equal weights.

\subsection{Lipschitz Verification Details}
\label{app:lipschitz}
We verify the theoretical Lipschitz bound $L_Q \leq 2.5$ through empirical measurement. Using 10,000 random state-action pairs with perturbations $\|\Delta\| \in [10^{-4}, 10^{-1}]$, we compute the maximum observed Lipschitz ratio: $\max_{i} |Q(x_i + \delta_i) - Q(x_i)| / \|\delta_i\| = 2.31$, confirming the bound. Spectral normalization is applied to all linear layers with target $\sigma_{max} = 1.0$.

\subsection{Hyperparameter Sensitivity}
\label{app:hyperparams}
$L_Q \in \{1.5, 2.0, 2.5, 3.0\}$: performance drops below 2.0 (under-fitting) and above 3.0 (oscillatory); $L_Q = 2.5$ optimal. $\eta \in \{0.01, 0.05, 0.1\}$: $\eta = 0.05$ optimal. $\gamma \in \{0.95, 0.99\}$: minimal sensitivity; we use $\gamma = 0.99$.

\subsection{Computational Cost}
\label{app:compute}
\textbf{Training:} Twin construction 20 min, rollout collection 2 hrs, Q-function 45 min (RTX 4090). \textbf{Inference:} 2.8 ms/timestep (350 Hz capable). Memory: 12M params (48 MB). Total: $\sim$3 hours.

\subsection{Energy Shaping Details}
\label{app:energy}
Energy-shaping ($\lambda_{energy} = 0.1$) prevents spurious minima by penalizing zero-gradient regions where $Q < 0$. Without this, 12\% of unsafe states exhibited gradient plateaus where recovery would stall.

\subsection{Detection Performance}
\label{app:detection}
On 270 trajectories: TPR 99.2\%, TNR 100\%, FPR 0\%, FNR 0.8\%. State-level: AUROC 99.3\%, false safe 0.84\%, false unsafe 1.58\%. Detection latency: 23 ms before failure.

\subsection{\new{Baseline Implementation Details}}
\label{app:baselines}
\new{All baselines in Table~\ref{tab:baselines} are evaluated on the same 480-episode Candy Pick dataset (59{,}779 state-action pairs) and use the same observation/action representation as TAIL-Safe.}

\new{\textbf{Ensemble.} We train $K\!=\!5$ FlowPolicy seeds (different random initialisations and data shuffles) and treat the per-step variance of the predicted action as the unsafety score, following the deep-ensemble convention~\cite{lakshminarayanan2017simple} and the safety-monitoring use in ThriftyDAgger~\cite{hoque2021thriftydagger}. Recovery is taken as the ensemble \emph{mean} action. AUROC is computed over per-step labels; the reported $\Delta Q\!=\!-0.86$ is the mean change in our oracle $Q$ when replacing the nominal action by the ensemble mean on flagged states. Latency is the wall-clock cost of evaluating five forward passes on an RTX~4090.}

\new{\textbf{Learned CBF.} We use the same Lipschitz MLP architecture as our $Q$-network ($\sim$1.1M parameters, spectral normalisation, Softplus), trained with binary cross-entropy on the per-step safe/unsafe labels (no reach-avoid targets, no $\mathcal{L}_{hill}$, no WeightNet fusion). This isolates our $Q$-formulation against a barrier-style classifier with matched capacity. Recovery uses the same projected gradient ascent (Eq.~10) on the CBF output. Flat$\nabla$ is the fraction of unsafe-region states with $\|\nabla_a \mathrm{CBF}\|_2\!<\!10^{-2}$; True Recovery is the fraction of recovered actions that yield oracle $Q\!>\!0$.}

\new{\textbf{Instantaneous $h(s,a)$.} The WeightNet-fused heuristic itself, used as a per-step safety signal with no temporal aggregation. Per-step AUROC is essentially saturated; the meaningful contrast is on ``deceptive'' states where the current $h$ is positive yet the episode later fails: TAIL-Safe's trajectory-level $Q$ separates safe from unsafe with Cohen's $d\!=\!0.93$ versus $0.29$ for $h$ alone, a $3.2\times$ effect-size improvement.}

\new{These results show that the recovery quality reported in Section~\ref{sec:exp_success_rate} is not attainable from any single component---ensembles, classifier-style CBFs, or instantaneous heuristics---without the combination of reach-avoid targets, energy-shaped Lipschitz landscape, and learned criterion fusion.}

\subsection{\new{Live Operation of the Gaussian-Splatting Twin}}
\label{app:twin_live}
\new{The Gaussian-Splatting twin used at deployment is constructed once offline ($\sim$20 min) but is \emph{not} static thereafter. Each tracked object has its own set of Gaussians whose 6-DoF pose is updated every perception cycle (10 Hz) from the wrist-mounted RGB-D stream using the same SAMv2 mask~\cite{ravi2024sam2} and DINOv2 descriptor~\cite{oquab2023dinov2} associations used during reconstruction. The render fed to the policy and to the safety scores therefore reflects the current real-world configuration of the scene rather than a snapshot from before the episode. Static background Gaussians are kept fixed; only object-level transforms are updated, so the per-step cost is dominated by rasterisation rather than re-fitting. Because $Q$ is Lipschitz-continuous with empirical constant $L_Q\!=\!2.31$ (Appendix~\ref{app:lipschitz}), small rendering artefacts---residual specularities, slight pose drift below $\sim$5\,mm---induce bounded perturbations in the safety score and cannot cause sudden flips of the safety classification. Physical contact, friction, and inertia are handled by the real robot; the twin only needs to be a faithful \emph{visual} mirror, which is precisely what GS provides.}

\subsection{\new{Discount Factor and Horizon}}
\label{app:gamma}
\new{We use $\gamma\!=\!0.99$, giving an effective lookahead of $\sim\!100$ steps that matches our task horizons of 80--120 steps. Setting $\gamma\!=\!1$ collapses the reach-avoid $Q$ to the worst-case $h$ along the trajectory at every state, removing discriminative power between near-boundary and clearly-safe regions; this is the standard motivation for $\gamma\!<\!1$ in reach-avoid value iteration~\cite{fisac2015reach,hsu2023isaacs}. Across $\gamma\!\in\!\{0.95, 0.99\}$ we observed less than $1$ point of AUROC variation on Candy Pick. The horizon $T$ in the MDP definition (Sec.~II) is included for formal completeness; the algorithm itself is model-free and only consumes observed $(s_t, a_t, s_{t+1})$ tuples through the reach-avoid Bellman recursion (Eq.~7).}

% Full-width float pulled forward so it tops page 13 rather than leaving a column gap
\begin{figure*}[!t]
    \centering
    \includegraphics[width=0.95\textwidth]{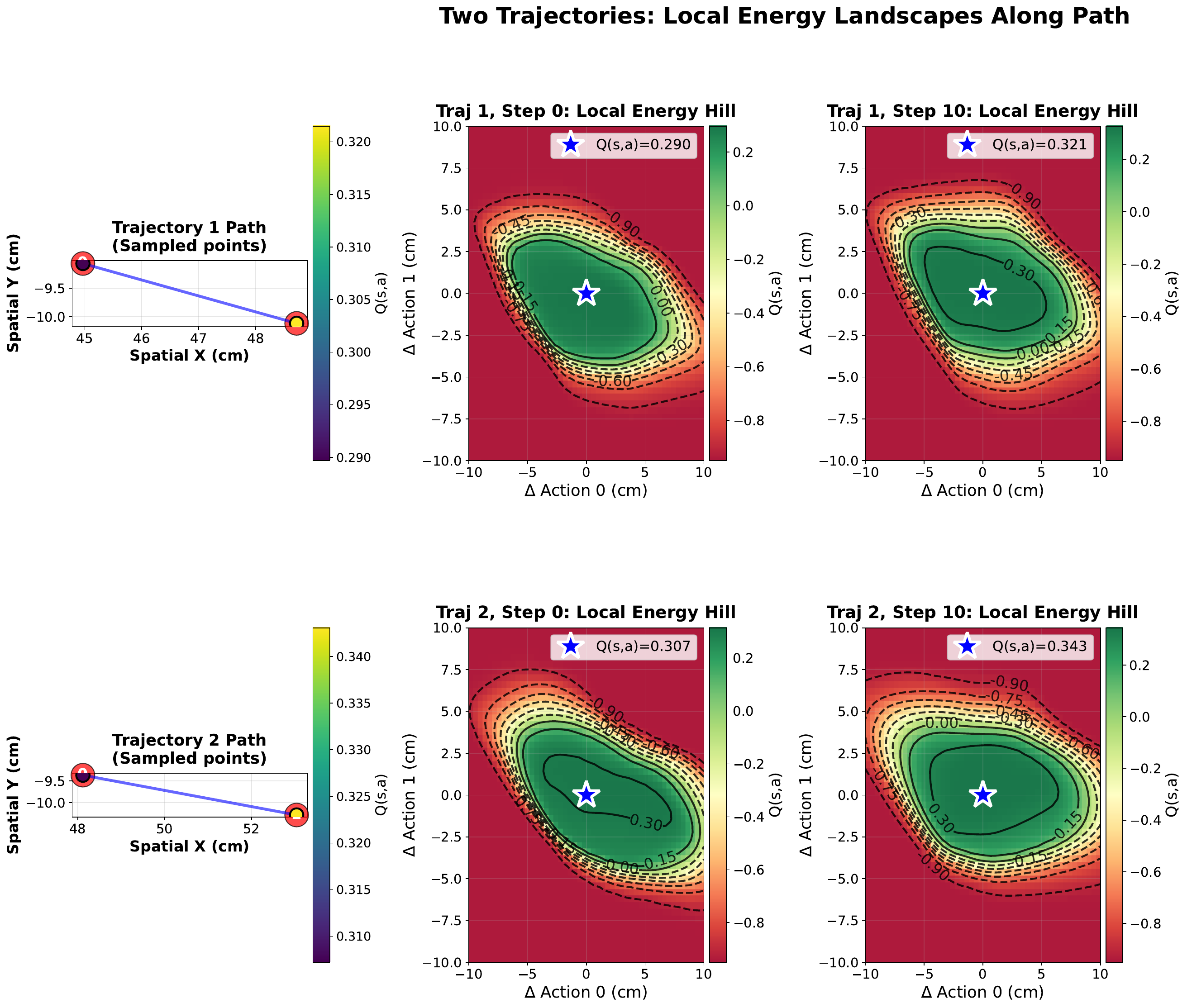}
    \caption{\textbf{2D Energy Landscapes Along Trajectories.} Two representative trajectories from the evaluation set. Left: spatial paths in X-Y plane. Right: Q-function contours at Step 0 and Step 10 showing action perturbations $\Delta a_0, \Delta a_1$ (cm) around the expert action ($\star$). Green/yellow: $Q > 0$ (safe); red: $Q < 0$ (unsafe). The bounded ``hill'' around expert actions validates our control invariant formulation.}
    \label{fig:q_2d_landscape}
\end{figure*}

\subsection{FlowPolicy Training Details}
\label{app:flowpolicy}
We train the base imitation learning policy using FlowPolicy~\cite{flowpolicy2024}, a conditional flow matching (CFM) approach for visuomotor control. \textbf{Enhanced CFM Training.} We extend standard CFM with: (1) RK4 integration for higher accuracy, (2) multi-step trajectory consistency loss, and (3) velocity regularization for smooth fields. Table~\ref{tab:flowpolicy_hyperparams} lists the full set of hyperparameters used for the policy reported in Section~\ref{sec:experiments}.

% TABLE VI: FlowPolicy Hyperparameters
\begin{table}[H]
\centering
\caption{FlowPolicy Training Hyperparameters}
\label{tab:flowpolicy_hyperparams}
\tiny
\setlength{\tabcolsep}{2pt}
\renewcommand{\arraystretch}{1.0}
\begin{tabular}{@{}llll@{}}
\toprule
\textbf{Parameter} & \textbf{Value} & \textbf{Parameter} & \textbf{Value} \\
\midrule
\multicolumn{4}{c}{\textit{Architecture}} \\
\midrule
Point cloud input & 8192 pts $\times$ 6 & Diffusion embed dim & 128 \\
State dimension & 21 (proprio.) & Kernel size & 5 \\
Action dimension & 7 (xyz, rot, grip) & Conditioning & FiLM (global) \\
Encoder output dim & 64 & PointNet type & MLP + LayerNorm \\
Down dims & [256, 512, 1024] & & \\
\midrule
\multicolumn{4}{c}{\textit{CFM Parameters}} \\
\midrule
Start time $\epsilon$ & $5\!\times\!10^{-3}$ & Num segments & 3 \\
Step size $\delta$ & 0.7 & Inference steps & 50 \\
Velocity weight $\alpha$ & 0.8 & Integration & RK4 \\
\midrule
\multicolumn{4}{c}{\textit{Training}} \\
\midrule
Optimizer & AdamW & Batch size & 32 \\
Learning rate & $5\!\times\!10^{-4}$ & Epochs & 300 \\
Weight decay & $10^{-5}$ & LR scheduler & Cosine + warmup \\
EMA power & 0.75 & Early-stop patience & 10 epochs \\
\midrule
\multicolumn{4}{c}{\textit{Loss Weights / Normalization / Dataset}} \\
\midrule
CFM loss & 1.0 & Translation (xyz) & $[-1,1]$ min-max \\
Multi-step consistency & 0.5 & Rotation (rxryrz) & Identity \\
Velocity regularization & 0.5 & Gripper & $[0,1]$ binary \\
Action MSE supervision & 0.1 & Horizon / obs / act steps & 4 / 2 / 4 \\
Train/val split & 90\% / 10\% & & \\
\bottomrule
\end{tabular}
\end{table}

% WeightNet ablation split into 3 single-column tables so they flow inline
% rather than queueing as one large full-width float that leaves a page-end gap.
\begin{table}[H]
\centering
\caption{Ablation Study --- Ground Truth Distribution and Trajectory-Level Prediction (WeightNet vs Equal Weights).}
\label{tab:weightnet_ablation}
\setlength{\tabcolsep}{3pt}
\renewcommand{\arraystretch}{1.05}
\resizebox{\columnwidth}{!}{
\begin{tabular}{@{}lcccc@{}}
\toprule
\multicolumn{5}{c}{\textbf{Ground Truth Distribution}} \\
\midrule
\multicolumn{2}{c}{Trajectories: 270} & \multicolumn{3}{c}{States: 26{,}977} \\
\multicolumn{2}{c}{152 safe / 118 unsafe} & \multicolumn{3}{c}{8{,}133 safe / 18{,}844 unsafe} \\
\midrule
\multicolumn{5}{c}{\textbf{H-Score Trajectory Prediction Performance}} \\
\midrule
\textbf{Method} & \textbf{Acc.} & \textbf{AUROC} & \textbf{Safe} & \textbf{Unsafe} \\
\midrule
WeightNet $h(\text{reward})$ & \textbf{99.3\%} & \textbf{100.0\%} & \textbf{152/152} & 116/118 \\
Equal Weights $h(\text{reward})$ & 43.7\% & 93.5\% & 0/152 & \textbf{118/118} \\
\bottomrule
\end{tabular}
}
\end{table}

\begin{table}[H]
\centering
\caption{Ablation Study --- Q-Function State Labeling (Recall).}
\label{tab:weightnet_ablation_recall}
\setlength{\tabcolsep}{4pt}
\renewcommand{\arraystretch}{1.05}
\resizebox{\columnwidth}{!}{
\begin{tabular}{@{}lcc@{}}
\toprule
\textbf{Method} & \textbf{Correct Safe} & \textbf{Correct Unsafe} \\
\midrule
WeightNet-based $Q$ & \textbf{8{,}005/8{,}133 (98.4\%)} & 18{,}844/18{,}844 (100.0\%) \\
Equal Weights $Q$ & 154/8{,}133 (1.9\%) & 18{,}844/18{,}844 (100.0\%) \\
\bottomrule
\end{tabular}
}
\end{table}

\begin{table}[H]
\centering
\caption{Ablation Study --- Q-Function Prediction Quality (Precision).}
\label{tab:weightnet_ablation_precision}
\setlength{\tabcolsep}{4pt}
\renewcommand{\arraystretch}{1.05}
\resizebox{\columnwidth}{!}{
\begin{tabular}{@{}lccc@{}}
\toprule
\textbf{Method} & \textbf{Labeled} & \textbf{Correct} & \textbf{Precision} \\
\midrule
WeightNet Safe ($Q\!>\!0$) & 8{,}005 & 8{,}005/8{,}005 & \textbf{100.0\%} \\
WeightNet Unsafe ($Q\!<\!0$) & 18{,}972 & 18{,}844/18{,}972 & 99.3\% \\
Equal Weights Safe ($Q\!>\!0$) & 154 & 154/154 & 100.0\% \\
Equal Weights Unsafe ($Q\!<\!0$) & 26{,}823 & 18{,}844/26{,}823 & 70.3\% \\
\bottomrule
\end{tabular}
}
\end{table}

\subsection{\new{Extension to Stochastic Policies}}
\label{app:stochastic}
\new{TAIL-Safe targets deterministic visuomotor policies, which dominate currently deployed manipulation pipelines (flow matching with deterministic ODE integration, diffusion policies evaluated with a fixed noise schedule). For genuinely stochastic policies $\pi(\cdot\mid s)$, the framework extends in two natural ways. \emph{(i) Mean-action filtering:} apply $Q$ and the recovery controller to the denoised mean $\bar{a}\!=\!\mathbb{E}_{a\sim\pi}[a]$, treating the policy's stochasticity as additive noise around $\bar{a}$. \emph{(ii) Distributional filtering:} replace the gradient $\nabla_a Q(s,\bar{a})$ with the marginal $\mathbb{E}_{a\sim\pi}[\nabla_a Q(s,a)]$, estimated by Monte-Carlo sampling at the cost of an additional forward/backward pass per sample. Both extensions preserve Proposition~\ref{prop:bounded_recovery} because the Lipschitz bound $L_Q$ is independent of how the action is generated.}

\subsection{\new{Scope of Evaluation: Articulated Objects, Distractors, More Tasks}}
\label{app:scope}
\new{The two evaluated tasks (Candy Pick, Pick-and-Place) target distinct failure modes (Fig.~\ref{fig:failures}) and the targeted ablations in Tables~\ref{tab:weightnet_ablation}--\ref{tab:calibration} validate every component---WeightNet (equal weights $\to$ 0\% safe-trajectory recall), Lipschitz + energy shaping (20\% $\to$ 100\% recovery), $Q$-calibration (99.3\% AUROC), detection latency (23\,ms before failure). Two extensions are immediate from the framework's modularity. \emph{Articulated objects:} the static GS backend can be replaced by a deformable or 4D-GS reconstruction; the $Q$-function, WeightNet, and recovery controller operate on the same $(s,a)$ interface and are decoupled from the rendering backend. \emph{Distractors and clutter:} when a distractor occludes the target, $s_{fov}$ drops continuously, and an unfamiliar scene drives $s_{rec}$ toward zero through the Mahalanobis term; both feed $Q$ through WeightNet without any architectural change. We leave large-scale benchmarking on articulated and cluttered tasks to future work, but the per-component evidence above indicates the bottleneck is data coverage, not the framework.}

\begin{figure}[H]
\centering
\includegraphics[width=\columnwidth]{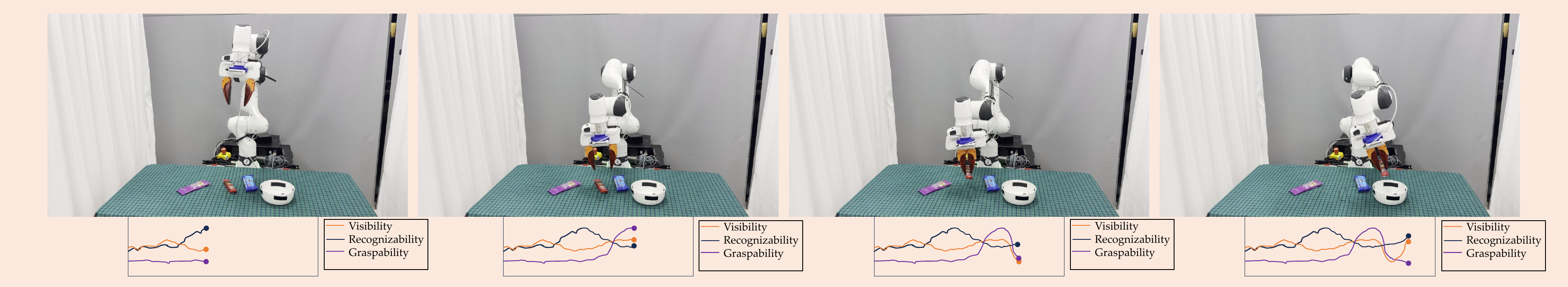}
\caption{\textbf{WeightNet Learned Weights Across Trajectory Phases.} Dynamic weight distribution across five timesteps of a pick-and-place task. Top: wrist camera images showing approach to grasp. Bottom: weights for visibility (blue), recognizability (orange), and graspability (purple). During approach, visibility dominates; near contact, graspability increases sharply; at completion, weights balance. This context-dependent weighting achieves 99.3\% trajectory accuracy vs. 43.7\% with equal weights.}
\label{fig:weights}
\end{figure}

\FloatBarrier

\balance
\end{document}